\def\R{\mathbb{R}}
\def\bbS{\mathbb{S}}
\def\cX{\mathcal{X}}
\def\cY{\mathcal{Y}}
\def\S{\mathbf{S}}
\def\T{\mathbf{T}}
\def\c{c} % \def\c{\mathbf{c}}
\def\d{d} % \def\d{\mathbf{d}}
\def\e{e} % \def\e{\mathbf{e}}
\def\f{f} % \def\f{\mathbf{f}}
\def\n{n}% \def\n{\mathbf{n}}
\def\s{s} % \def\s{\mathbf{s}}
\def\bt{t} % \def\bt{\mathbf{t}}
\def\u{\mathbf{u}}
\def\v{v} % \def\v{\mathbf{v}}
\def\x{x} % \def\x{\mathbf{x}}
\def\y{\mathbf{y}}
\def\z{z} % \def\z{\mathbf{z}}
\def\ie{\textit{i.e.}}
\newcommand{\myparagraph}[1]{\noindent\textbf{#1.}}
\def\1{\mathbf{1}}
\def\0{\mathbf{0}}
\DeclareMathOperator*{\argmin}{arg\,min}
\DeclareMathOperator*{\argmax}{arg\,max}
\newcommand{\rom}[1]{%
  \textup{\uppercase\expandafter{\romannumeral#1}}%
}
\newtheorem{theorem}{Theorem}[section]
\newtheorem{definition}{Definition}[section]
\newtheorem{example}{Example}[section]
\newtheorem{lemma}[theorem]{Lemma}
 \newcommandx{\ambartechnical}[2][1=]{\todo[linecolor=teal,backgroundcolor=teal!25,bordercolor=teal,#1]{AP Technical: #2}}
  \newcommandx{\ambarwriting}[2][1=]{\todo[linecolor=orange,backgroundcolor=orange!25,bordercolor=orange,#1]{AP Writing: #2}}
\title{Adversarial Examples Might be Avoidable: The Role of Data Concentration in Adversarial Robustness}
\author{%
Ambar Pal \\
\texttt{ambar@jhu.edu} \And 
Jeremias Sulam  \\
\texttt{jsulam1@jhu.edu} \And 
Ren\'e Vidal  \\
\texttt{vidalr@upenn.edu}
}
\begin{document}

\maketitle

\begin{abstract}
The susceptibility of modern machine learning classifiers to adversarial examples has motivated theoretical results suggesting that these might be unavoidable. However, these results can be too general to be applicable to natural data distributions. Indeed, humans are quite robust for tasks involving vision. This apparent conflict motivates a deeper dive into the question: Are adversarial examples truly unavoidable? 
In this work, we theoretically demonstrate that a key property of the data distribution -- concentration on small-volume subsets of the input space -- determines whether a robust classifier exists. We further demonstrate that, for a data distribution concentrated on a union of low-dimensional linear subspaces, utilizing structure in data naturally leads to classifiers that enjoy data-dependent polyhedral robustness guarantees, improving upon methods for provable certification in certain regimes.
%We further demonstrate that, for a data distribution concentrated on a union of low-dimensional linear subspaces, exploiting data structure naturally leads to classifiers that enjoy good robustness guarantees, improving upon methods for provable certification in certain regimes.
\end{abstract}
 
\section{Introduction, Motivation and Contributions} \label{sec:intro}
Research in adversarial learning has shown that traditional neural network based classification models are prone to anomalous behaviour when their inputs are modified by tiny, human-imperceptible perturbations. Such perturbations, called adversarial examples, lead to a large degradation in the accuracy of classifiers \cite{szegedy2013intriguing}. This behavior is problematic when such classification models are deployed in security sensitive applications. Accordingly, researchers have and continue to come up with \emph{defenses} against such adversarial attacks for neural networks. 

Such defenses  \cite{shafahi2019adversarial,wong2019fast,papernot2016distillation,guo2018countering} modify the training algorithm, alter the network weights, or employ preprocessing to obtain classifiers that have improved empirical performance against adversarially corrupted inputs. However, many of these defenses have been later broken by new adaptive attacks \cite{carliniA,carliniB}. This motivated recent impossibility results for adversarial defenses, which aim to show that all defenses admit adversarial examples. While initially such results were shown for specially parameterized data distributions \cite{fawzi}, they were subsequently expanded to cover general data distributions on the unit sphere and the unit cube \cite{goldstein}, as well as for distributions over more general manifolds \cite{dohmatob}. %\ambarwriting{Remember to mention the other results showing it might provably takes a lot of data to construct robust classifiers.}

On the other hand, we humans are an example of a classifier capable of very good (albeit imperfect \cite{humanadversarials}) robust accuracy against $\ell_2$-bounded attacks for natural image classification. Even more, a large body of recent work has constructed \emph{certified} defenses \cite{cohen2019certified,yang2020randomized,chiang2020certified,levine2020randomized,fischer2020certified,sulam2020adversarial} which obtain non-trivial performance guarantees under adversarially perturbed inputs for common datasets like MNIST, CIFAR-10 and ImageNet. This apparent contention between impossibility results and the existence of robust classifiers for natural datasets indicates that the bigger picture is more nuanced, and motivates a closer look at the impossibility results for adversarial examples. 

% Motivated by this apparent conflict between existing impossibility results and the success of certified defenses on standard datasets, we take a closer look at the impossibility results for adversarial robustness. 
Our first contribution is to show that these results can be circumvented by data distributions whose mass concentrates on small regions of the input space. This naturally leads to the question of whether such a construction is necessary for adversarial robustness. We answer this question in the affirmative, formally proving that a successful defense exists only when the data distribution concentrates on an exponentially small volume of the input space. At the same time, this suggests that exploiting the inherent structure in the data is critical for obtaining classifiers with broader robustness guarantees. %better robust classifiers. 

Surprisingly, almost\footnote{See \cref{sec:relwork} for more details.} %\ambarwriting{Mention Soumayeh work on projected Randomized Smoothing, and maybe also Awasthi's!}
all \emph{certified} defenses do not exploit any structural aspects of the data distribution like concentration or low-dimensionality. %The few works that do exist in this direction \cite{awasthiA,B} work for the case when data lies on a single linear subspace. 
Motivated by our theoretical findings, we study the special case of data distributions concentrated near a union of low-dimensional linear subspaces, to create a certified defense for perturbations that go beyond traditional $\ell_p$-norm bounds. We find that simply exploiting the low-dimensional data structure leads to a natural classification algorithm for which we can derive norm-independent polyhedral certificates. We show that our method can certify accurate predictions under adversarial examples with an $\ell_p$ norm larger than what can be certified by applying existing, off-the-shelf methods like randomized smoothing \cite{cohen2019certified}. 
%We show that our method can be used to certify against adversarial perturbations having large $\ell_p$ norms, and hence applying an off-the-shelf method like randomized smoothing is unable to certify against such perturbations. 
Thus, we demonstrate the importance of structure in data %\ambarwriting{Too many mentions of the vague term ``data-structure''} %
for both the theory and practice of certified adversarial robustness. 

More precisely, we make the following main contributions in this work:
\begin{enumerate}
\item We formalize a notion of $(\epsilon, \delta)$-concentration of a probability distribution $q$ in \cref{sec:necessarycond}, which states that $q$ assigns at least $1 - \delta$ mass to a subset of the ambient space having volume $O(\exp{(-n \epsilon)})$. We show that $(\epsilon, \delta)$-concentration of $q$ is a necessary condition for the existence of any classifier obtaining at most $\delta$ error over $q$, under perturbations of size $\epsilon$.

\item We find that $(\epsilon, \delta)$-concentration is too general to be a sufficient condition for the existence of a robust classifier, and we follow up with a stronger notion of concentration in \cref{sec:sufficientcond} which is sufficient for the existence of robust classifiers. Following this stronger notion, we construct an example of a strongly-concentrated distribution, %\ambartechnical{extend to the unit cube}
which circumvents existing impossibility results on the existence of robust classifiers.

\item We then consider a data distribution $q$ concentrated on a union of low-dimensional linear subspaces in \cref{sec:subspace}. 
%For this distribution, we construct a classifier robust to on-subspace perturbations, and obtain its associated certificates in \cref{sec:onsubspace}. We demonstrate that the certified regions have a simple description as the convex conic hull of selected points in the training data. 
%\item Finally, we consider a data distribution concentrated on a union of low-dimensional linear subspaces perturbed by additive $\ell_2$ bounded noise in \cref{sec:offsubspace}. 
%For this distribution, we construct a classifier robust to (a subset of) perturbations bounded by the $\ell_2$ distance from the underlying subspaces, and obtain its associated certificates. 
% For this distribution, 
We construct a classifier for $q$ that is robust to perturbations following threat models more general than $\ell_p$. Our analysis results in polyhedral %under more general threat models, resulting in attacks that are constrained to polyhedrons %Note that the adversarial attacks might not have small $\ell_p$ norm themselves, specifically, this is a superset of the commonly studied $\ell_2$ attack model. We demonstrate that our 
certified regions whose faces and extreme rays are described by selected points in the training data.

\item We perform empirical evaluations on MNIST in \cref{sec:expt}, demonstrating that our certificates are complementary to existing off-the-shelf approaches like Randomized Smoothing (RS), in the sense that
both methods have different strengths. In particular, we demonstrate a region of adversarial perturbations where our method is certifiably robust, but RS is not. We then combine our method with RS to obtain certificates that enjoy the best of both worlds.% roneither is \emph{better} than the other. In particular, we demonstrate that there is a set of adversarial perturbations that our method is robust to but RS is not. Conversely, we also find perturbations that are certified by RS but not by our method. \ambartechnical{Empirically combine RS and our method to obtain the \emph{union} of these methods?}
\end{enumerate}
%
% The remainder of this manuscript is organized as follows. In \cref{sec:necessarycond}, we introduce notation, define concentration and show that it is a neccesary condition for robustness. We follow up in \cref{sec:sufficientcond} to show that a stronger notion of concentration is sufficient for robustness. Then, we study a particular distribution concentrated over a union of linear subspaces in \cref{sec:subspace} and derive certificates of robustness, which are then evaluated empirically in \cref{sec:expt}. 
% Finally, we note connections to past literature and conclude in \cref{sec:conclusion}.
%
\section{Existence of Robust Classifier Implies Concentration} \label{sec:necessarycond}
We will consider a classification problem over $\cX \times \cY$ defined by the data distribution $p$ such that $\cX$ is bounded and $\cY = \{1, 2, \ldots, K\}$. We let $q_k$ % \colon \cX \to \R_{\geq 0}$
denote the conditional distribution $p_{X|Y=k}$ for class $k \in \cY$. We will assume that the data is normalized, i.e., $\cX = B_{\ell_2}(0, 1)$, and the boundary of the domain is far from the data, i.e., for any $x \sim q_k$, an adversarial perturbation of $\ell_2$ norm at most $\epsilon$ does not take $x$ outside the domain $\cX$.\footnote{More details in \cref{proof:concentration}.}

% Previous Version
% We will consider a classification problem over $\cX \times \cY$ defined by the data distribution $p$ such that $\cX$ is bounded and $\cY = \{1, 2, \ldots, K\}$. WLOG, assume that all quantities are normalized so that $\cX$ is an $\ell_2$ ball of radius $1$, i.e., $\cX = B_{\ell_2}(0, 1)$. Let $q_k \colon \cX \to \R_{\geq 0}$ denote the conditional distribution for class $k \in \cY$, having support $Q_k$. We will impose a technical restriction \ambarwriting{I wonder whether this should be fleshed out completely here, or moved to the appendix. A casual reader might take this $\epsilon \leq 0.1$ to be evidence that our results are only true for low $\epsilon$.} to ensure that an adversarial perturbation on any data-point sampled from $p$ does not take the resultant attacked point outside the domain $\cX$. For this, we will assume that there is a sufficient gap between the supports and the boundary of the domain, i.e., $Q_k \subseteq B_{\ell_2}(0, 0.9)$, and that the $\ell_2$--adversarial attack has strength $\epsilon \leq 0.1$. The numerical values are arbitrary here, and taken to simplify notation. \ambartechnical{Provide the more complicated proof without this assumption, and state how the concentration criterion changes to a much less interpretable version}

We define the robust risk of a classifier $f \colon \cX \to \cY$ against an  adversary making perturbations whose $\ell_2$ norm is bounded by $\epsilon$ as \footnote{Note that for $f(\bar x)$ to be defined, it is implicit that $\bar x \in \cX$ in \eqref{robustness-defn}.}
\begin{equation}
R(f, \epsilon) = \Pr_{(x, y) \sim p} \left( \exists \bar x \in B_{\ell_2}(x, \epsilon) \text{ such that } f(\bar x) \neq y \right). \label{robustness-defn}
\end{equation}
% To begin discussion about the existence of robust classifiers, we have to firstly define a robust classifier. 
We can now define a robust classifier in our setting.
\begin{definition}[Robust Classifier]
A classifier $g$ is defined to be $(\epsilon, \delta)$-robust if the robust risk against perturbations with $\ell_2$ norm bounded by $\epsilon$ is at most $\delta$, \ie, if $R(g, \epsilon) \leq \delta$. %Here, $\delta$ is to be thought of as close to $0$. 
\end{definition}

The goal of this section is to show that if our data distribution $p$ admits an $(\epsilon, \delta)$--robust classifier, then $p$ has to be \emph{concentrated}.
%Given that our data distribution $p$ admits an $(\epsilon, \delta)$--robust classifier $f$, the purpose of this section is now to show that $p$ is \emph{concentrated}.
Intuitively, this means that $p$ assigns a ``large'' measure to sets of ``small'' volume. We define this formally now.

\begin{definition}[Concentrated Distribution] \label{def:conc}
A probability distribution $q$ over a domain $\cX \subseteq \R^n$ is said to be $(C,\epsilon, \delta)$-concentrated, if there exists a subset $S \subseteq \cX$ such that $q(S) \geq 1 - \delta$ but ${\rm Vol}(S) \leq C \exp(-n \epsilon)$. % for some constant $C > 0$. 
Here, %$\delta$ is thought of close to $0$, and 
${\rm Vol}$ denotes the standard Lebesgue measure on $\R^n$, and $q(S)$ denotes the measure of
%probability of sampling a point in 
$S$ under $q$. 
\end{definition}

With the above definitions in place, we are ready to state our first main result.

\begin{restatable}[]{theorem}{concentration}
If there exists an $(\epsilon, \delta)$-robust classifier $f$ for a data distribution $p$, then at least one of the class conditionals $q_1, q_2, \ldots, q_K$, say $q_{\bar k}$, must be $(\bar C, \epsilon, \delta)$--concentrated. Further, if the classes are balanced, then all the class conditionals are $(C_{\max}, \epsilon, K \delta)$-concentrated. Here, $\bar C = {\rm Vol}\{x \colon f(x) = \bar k\}$, and $C_{\max} = \max_k {\rm Vol}\{x \colon f(x) = k\}$ are constants dependent on $f$.
\label{th:concentration}
\end{restatable}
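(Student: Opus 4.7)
My approach is to translate the probabilistic robustness hypothesis into a deterministic geometric statement about the decision regions of $f$, and then apply the Brunn--Minkowski inequality to extract the $\exp(-n\epsilon)$ volume decay. Write $A_k = \{x \in \cX : f(x) = k\}$ for the decision region of class $k$, and let $A_k^{-\epsilon} = \{x \in \cX : B_{\ell_2}(x,\epsilon) \subseteq A_k\}$ denote its $\epsilon$-erosion. By the footnote, we may ignore boundary effects from $\partial\cX$, so a point $x$ is correctly classified under every $\epsilon$-perturbation iff $x \in A_y^{-\epsilon}$. Hence, writing $\pi_k = \Pr(Y=k)$,
\begin{equation*}
\delta \;\geq\; R(f,\epsilon) \;=\; \sum_{k=1}^K \pi_k\bigl(1 - q_k(A_k^{-\epsilon})\bigr).
\end{equation*}

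\textbf{Step 1: pick a good class.} Since $\sum_k \pi_k = 1$ and each summand is nonnegative, the weighted average bound above forces at least one index $\bar k$ with $1 - q_{\bar k}(A_{\bar k}^{-\epsilon}) \leq \delta$, i.e.\ $q_{\bar k}(A_{\bar k}^{-\epsilon}) \geq 1 - \delta$. This makes $S := A_{\bar k}^{-\epsilon}$ the candidate concentration set. In the balanced case $\pi_k = 1/K$, the same inequality gives $\sum_k (1 - q_k(A_k^{-\epsilon})) \leq K\delta$, and since each term is nonnegative we get $q_k(A_k^{-\epsilon}) \geq 1 - K\delta$ for \emph{every} $k$ simultaneously.

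\textbf{Step 2: bound the volume via Brunn--Minkowski.} By construction $A_{\bar k}^{-\epsilon} + B_{\ell_2}(0,\epsilon) \subseteq A_{\bar k}$, so using the Brunn--Minkowski inequality and writing $V_n := \text{Vol}(B_{\ell_2}(0,1))$ (recall $\text{Vol}(B_{\ell_2}(0,\epsilon))^{1/n} = \epsilon V_n^{1/n}$),
\begin{equation*}
\text{Vol}(A_{\bar k}^{-\epsilon})^{1/n} + \epsilon V_n^{1/n} \;\leq\; \text{Vol}\bigl(A_{\bar k}^{-\epsilon} + B_{\ell_2}(0,\epsilon)\bigr)^{1/n} \;\leq\; \text{Vol}(A_{\bar k})^{1/n} \;=\; \bar C^{1/n}.
\end{equation*}
Rearranging and applying $(1-x)^n \leq e^{-nx}$ for $x \in [0,1]$ yields
\begin{equation*}
\text{Vol}(A_{\bar k}^{-\epsilon}) \;\leq\; \bar C\Bigl(1 - \epsilon (V_n/\bar C)^{1/n}\Bigr)^n \;\leq\; \bar C \exp\!\bigl(-n\epsilon (V_n/\bar C)^{1/n}\bigr).
\end{equation*}
The crucial observation is that $A_{\bar k} \subseteq \cX = B_{\ell_2}(0,1)$, so $\bar C \leq V_n$ and hence $(V_n/\bar C)^{1/n} \geq 1$. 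This collapses the exponent to $-n\epsilon$ and gives $\text{Vol}(S) \leq \bar C \exp(-n\epsilon)$, proving $q_{\bar k}$ is $(\bar C, \epsilon, \delta)$-concentrated. The balanced claim follows identically, replacing $\bar C$ by $C_{\max}$ and $\delta$ by $K\delta$ class-by-class.

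\textbf{Expected obstacle.} The conceptually tight step is Step 2: one needs precisely the right form of the isoperimetric-type bound so that the $\exp(-n\epsilon)$ factor appears with the correct constant. The temptation is to use $S = A_{\bar k}$ itself (which is too large) or to argue via a tube/surface-area bound (which typically yields linear, not exponential, improvement). Brunn--Minkowski combined with the domain normalization $\cX = B_{\ell_2}(0,1)$ is what converts the linear Minkowski-sum inequality into an exponential-in-$n$ decay, and checking that $\bar C \leq V_n$ is what makes the final constant in the exponent clean. Everything else is a straightforward averaging argument over the class index.
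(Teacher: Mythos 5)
Your proof is correct and follows essentially the same route as the paper: average the robust risk over classes to find a class $\bar k$ whose conditional puts mass $1-\delta$ on the $\epsilon$-erosion $A_{\bar k}^{-\epsilon}$ of its decision region (using the boundary-gap assumption to identify the robustly-correct set with the erosion), then apply Brunn--Minkowski together with $\bar C \le V_n$ to get ${\rm Vol}(A_{\bar k}^{-\epsilon}) \le \bar C(1-\epsilon)^n \le \bar C e^{-n\epsilon}$. The only cosmetic difference is that the paper substitutes ${\rm Vol}(B_{\ell_2}(0,1))^{1/n} \ge {\rm Vol}(A_{\bar k})^{1/n}$ before rearranging while you carry the ratio $(V_n/\bar C)^{1/n} \ge 1$ to the end; the balanced-class case is handled identically in both.
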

The proof is a natural application of the Brunn-Minkowski theorem from high-dimensional geometry, essentially using the fact that an $\epsilon$-shrinkage of a high-dimensional set has very small volume. We provide a brief sketch here, deferring the full proof to \cref{proof:concentration}.
%The proof utilizes the Brunn-Minkowski theorem from high-dimensional geometry, and we provide a brief sketch here, deferring the full proof to \cref{proof:concentration}.
\begin{proof}[Proof Sketch]
% \proof[Proof Sketch] 
Due to the existence of a robust classifier $f$, \ie, $R(f, \epsilon) \leq \delta$, the first observation is that there must be at least one class which is classified with robust accuracy at least $1 - \delta$. Say this class is $k$, and the set of all points which do not admit an $\epsilon$-adversarial example for class $k$ is $S$. Now, the second step is to show that $S$ has the same measure (under $q_k$) as the $\epsilon$-shrinkage (in the $\ell_2$ norm) of the set of all points classified as class $k$. Finally, the third step involves using the Brunn-Minkowski theorem, to show that this $\epsilon$-shrinkage has a volume $O(\exp(-n \epsilon))$, thus completing the argument.
% Due to the existence of a robust classifier $f$, \ie, $R(f, \epsilon) \leq \delta$, the first observation is that there exists at least one subset $S$ of the input space, and a corresponding class $k$, such that, all points in $S$ are classified to be in class $k$ with accuracy at least $1 - \delta$ (under adversarial perturbations of $\ell_2$ norm at most $\epsilon$). The second step then involves showing that this region has the same measure as an $\epsilon$--shrinkage (in the $\ell_2$ norm) of the set of all points that are assigned the class $k$. Finally, the third step involves using the Brunn-Minkowski theorem, to show that this $\epsilon$-shrinkage has a volume $O(\exp(-n \epsilon))$, thus completing the argument. 
% \endproof
\end{proof}

% \proof[Proof Sketch] Due to the existence of a robust classifier $f$, i.e. $R(f, \epsilon) \leq \delta$, the first observation is that there exists at least one subset of the input space whose samples are classified accurately under adversarial perturbations of $\ell_2$ norm at most $\epsilon$ with measure at least $1 - \delta$ under the class conditional $q_k$, for some class $k$. The second step then involves showing that this region has the same measure as an $\epsilon$--shrinkage (in the $\ell_2$ norm) of the set of all points that are assigned the class $k$ by $f$. Finally, the third step involves using the Brunn-Minkowski inequality, to show that this $\epsilon$--shrinkage has a volume $O(\exp(-n \epsilon))$, thus completing the argument. 

\myparagraph{Discussion on \cref{th:concentration}} We pause here to understand some implications of this result. 
\begin{itemize}[leftmargin=*] 
    \item Firstly, recall the apparently conflicting conclusions from \cref{sec:intro} between impossibility results (suggesting that robust classifiers do not exist) and the existence of robust classifiers in practice (such as that of human vision for natural data distributions). \cref{th:concentration} shows that whenever a robust classifier exists, the underlying data distribution has to be concentrated. In particular, this suggests that natural distributions corresponding to MNIST, CIFAR and ImageNet might be concentrated. This indicates a resolution to the conflict: concentrated distributions must somehow circumvent existing impossibility results. Indeed, this is precisely what we will show in \cref{sec:sufficientcond}.

    \item Secondly, while our results are derived for the $\ell_2$ norm, it is not very hard to extend this reasoning to general $\ell_p$ norms. In other words, whenever a classifier robust to $\ell_p$-norm perturbations exists, the underlying data distribution must be concentrated.% w.r.t. the corresponding $\ell_p$ distance.
    
    \item Thirdly, \cref{th:concentration} has a direct implication towards classifier design. Since we now know that natural image distributions are concentrated, one should design classifiers that are tuned for small-volume regions in the input space. This might be the deeper principle behind the recent success \cite{zhang2022rethinking} of robust classifiers adapted to $\ell_p$-ball like regions in the input space.

    \item Finally, the \emph{extent} of concentration implied by \cref{th:concentration} depends on the classifier $f$, via the parameters $\epsilon, \delta$ and  $\bar C$. On one hand, we get \emph{high} concentration when $\epsilon$ is large, $\delta$ is small, and $\bar C$ is small. On the other hand, if the distribution $p$ admits an $(\epsilon, \delta)$-robust classifier such that $\bar C$ is large (e.g., a constant classifier), then we get \emph{low} concentration via \cref{th:concentration}. This is not a limitation of our proof technique, but a consequence of the fact that some simple data distributions might be very lowly concentrated, but still admit very robust classifiers, e.g., for a distribution having $95\%$ dogs and $5\%$ cats, the constant classifier which always predicts ``dog'' is quite robust.
\end{itemize}
%\ambarwriting{Write a paragraph on implications, showing how existence of humans shows that natural image datasets are concentrated in $\ell_2$. Comment how this can be easily extended to $\ell_p$. Then talk about how this provides theoretical justification for recent works on certified $\ell_\infty$ robustness which end up at classifiers respecting $\ell_\infty$ concentration.}
We have thus seen that data concentration is a necessary condition for the existence of a robust classifier. A natural question is whether it is also sufficient. We address this question now.%in the next section. 
\section{Strong Concentration Implies Existence of Robust Classifier} \label{sec:sufficientcond}
Say our distribution $p$ is such that all the class conditionals $q_1, q_2, \ldots, q_k$ are $(C, \epsilon, \delta)$-concentrated. Is this sufficient for the existence of a robust classifier? The answer is negative, as we have not 
% {\color{red}Theorem \ref{th:concentration} tells us that if there exists an $(\epsilon, \delta)$-robust classifier, then at least one of the class conditionals $q_1, q_2, \ldots, q_K$ must be $(\epsilon, \delta)$-concentrated.}
% %The definition of $(\epsilon, \delta)$-concentration tells us that each of the class conditionals $q_1, q_2, \ldots, q_K$ are concentrated. 
% However, this does not 
precluded the case where all of the $q_k$ are concentrated over the same subset $S$ of the ambient space. In other words, it might be possible that there exists a small-volume set $S \subseteq \cX$ such that $q_k(S)$ is high for all $k$. This means that whenever a data point lies in $S$, it would be hard to distinguish which class it came from. In this case, %it might be impossible to distinguish between the classes, and 
even an accurate classifier might not exist, let alone a robust classifier\footnote{Recall that classifier not accurate at a point $(x, y)$, i.e., $f(x) \neq y$, is by definition not robust at $x$, as a $v=0$ perturbation is already sufficient to ensure $f(x + v) \neq y$.}. To get around such issues, we define a stronger notion of concentration, as follows.

\begin{definition}[Strongly Concentrated Distributions] \label{def:strongconc} A distribution $p$ is said to be $(\epsilon, \delta, \gamma)$-strongly-concentrated if each class conditional distribution $q_k$ is
concentrated over the set $S_k \subseteq \cX$ such that $q_k(S_k) \geq 1 - \delta$, and $q_k\left(\bigcup_{k' \neq k} S_{k'}^{+2\epsilon}\right) \leq \gamma$, where $S^{+\epsilon}$ denotes the $\epsilon$-expansion of the set $S$ in the $\ell_2$ norm, i.e., $S^{+\epsilon} = \{ x \colon \exists \bar x \in S \text{ such that } \|x - \bar x\|_2 \leq \epsilon\}$.
\end{definition}

In essence, \cref{def:strongconc} states that each of the class conditionals are concentrated on subsets of the ambient space, which do not intersect too much with one another\footnote{More detailed discussion in \cref{app:strongconcdisc}.}. Hence, it is natural to expect that we would be able to construct a robust classifier by exploiting  these subsets. Building upon this idea, we are able to show \cref{th:strongconc}:

\begin{restatable}[]{theorem}{strongconc}
% \begin{theorem} \label{th:strongconc}
If the data distribution $p$ is 
%such that the set of class conditionals $\cQ = \{q_1, q_2, \ldots, q_k\}$ is 
$(\epsilon, \delta, \gamma)$-strongly-concentrated, then there exists an $(\epsilon, \delta + \gamma)$-robust classifier for $p$. \label{th:strongconc}
% \end{theorem}
\end{restatable}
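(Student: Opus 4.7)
The plan is to exhibit an explicit classifier built from the concentration sets $\{S_k\}_{k=1}^K$ and then bound its robust risk by $\delta + \gamma$ via a union bound over the two failure modes allowed by the strong concentration hypothesis. Concretely, I would define $f(x) = k$ whenever $x$ lies in the $\epsilon$-expansion $S_k^{+\epsilon}$ but in none of the other expansions $S_{k'}^{+\epsilon}$ for $k' \neq k$, breaking ties by any fixed rule and assigning an arbitrary class outside $\bigcup_k S_k^{+\epsilon}$.

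The geometric heart of the argument is a one-line triangle-inequality observation: if $x \in S_k$ and $x \notin \bigcup_{k' \neq k} S_{k'}^{+2\epsilon}$, then every $\bar x \in B_{\ell_2}(x, \epsilon)$ satisfies $f(\bar x) = k$. Indeed, $\bar x \in S_k^{+\epsilon}$ follows immediately from $x \in S_k$; and if $\bar x$ also lay in $S_{k'}^{+\epsilon}$ for some $k' \neq k$, there would exist $s' \in S_{k'}$ with $\|\bar x - s'\|_2 \leq \epsilon$, forcing $\|x - s'\|_2 \leq \|x-\bar x\|_2 + \|\bar x - s'\|_2 \leq 2\epsilon$ and hence $x \in S_{k'}^{+2\epsilon}$, contradicting the assumption. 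It follows that the set of $x$ on which $f$ is not $\epsilon$-robust for class $k$ is contained in $(\cX \setminus S_k) \cup \bigcup_{k' \neq k} S_{k'}^{+2\epsilon}$, whose two pieces have $q_k$-measure at most $\delta$ and $\gamma$ respectively by \cref{def:strongconc}. A union bound gives a $q_k$-failure mass of at most $\delta + \gamma$, and averaging against the marginal weights $\Pr(Y = k)$ then yields $R(f, \epsilon) \leq \delta + \gamma$, the claimed bound.

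I do not anticipate a serious obstacle: once the classifier is written down the proof is essentially verification, and the only subtlety is to make sure $f(\bar x)$ is always defined on the $\epsilon$-ball around a data point, which is guaranteed by the standing assumption in \cref{sec:necessarycond} that $\epsilon$-perturbations do not escape the domain $\cX$. If one wishes to avoid tie-breaking altogether, one can equivalently phrase the classifier as the indicator of $S_k^{+\epsilon}$ restricted to the points whose $\epsilon$-ball is covered by a single $S_k^{+\epsilon}$; the same union-bound accounting goes through unchanged.
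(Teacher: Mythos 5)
Your proposal is correct and follows essentially the same route as the paper: you construct the same classifier (predict $k$ on $S_k^{+\epsilon}$ minus the other expansions) and bound the failure set for class $k$ by $(\cX \setminus S_k) \cup \bigcup_{k' \neq k} S_{k'}^{+2\epsilon}$, exactly as the paper does via its $\epsilon$-contraction identities. Your direct triangle-inequality verification is just a more compact packaging of the paper's set-algebra lemmas, and your handling of the domain-boundary issue matches the paper's standing support assumption.
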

The basic observation behind this result is that if the conditional distributions $q_k$ had disjoint supports which were well-separated from each other, then one could obtain a robust classifier by predicting the class $k$ on the entire $\epsilon$-expansion of the set $S_k$ where the conditional $q_k$ concentrates, for all $k$. To go beyond this idealized case, we can exploit the strong concentration condition to carefully remove the intersections at the cost of at most $\gamma$ in robust accuracy. We make these arguments more precise in the full proof, deferred to \cref{proof:strongconc}, and we pause here to note some implications for existing results.
%
% The basic argument behind this result is to observe that, if the conditional distributions $q_k$ had disjoint supports, one could obtain a robust classifier by predicting the class $k$ on the entire $\epsilon$--expansion of the set $S_k$ where the conditional $q_k$ concentrates, for all $k$. To go beyond this simplistic case, we can exploit the strong concentration condition to carefully remove the intersections at the cost of at most $\gamma$ in robust accuracy. We make these arguments more precise in the full proof, deferred to \cref{proof:strongconc}, and we pause here to note some implications for existing results. 

\myparagraph{Implications for Existing Impossibility Results}  %It is useful to first understand the result in \cref{th:strongconc} in the context of the impossibility results for robust classifiers referred in \cref{sec:intro}.
% We will provide an example of how the ideas developed in \cref{th:strongconc} extend to other metric spaces, and provide a way to circumvent existing impossibility results in adversarial robustness \cite{goldstein}. 
To understand how \cref{th:strongconc} circumvents the previous impossibility results, consider the setting from \cite{goldstein} where the data domain is the sphere $\bbS^{n-1} = \{x \in \R^n \colon \|x\|_2 = 1\}$, and we have a binary classification setting with class conditionals $q_1$ and $q_2$. The adversary is allowed to make bounded perturbations w.r.t. the geodesic distance. %a data distribution on the unit sphere  which circumvents existing impossibility results on adversarial robustness, by exploiting strong concentration of the data distribution, with the metric taken to be the geodesic distance on the sphere.
% Previous version
% While we have derived our results thus far considering the domain $\cX$ to be $B_{\ell_2}(0, 1)$ endowed with the euclidean distance, the same principles can be applied when the domain (and consequently, the adversarial perturbations and the supports of the conditional densities) is not full-dimensional or euclidean, and the $\epsilon$--expansion is w.r.t. a metric on $\cX$. We provide an example of a strongly-concentrated %%%%
% \ambartechnical{Should we be more precise here about the $(\epsilon, \delta, \gamma)$ parameters of the strong concentration? I did not because $\epsilon$--expansion would be now defined w.r.t to a general distance function on $\cX$ which is not the euclidean $\ell_2$ distance.} %%%%%
% data distribution on the unit sphere $\cX = \bbS^{n-1} = \{x \in \R^n \colon \|x\|_2 = 1\}$ endowed with the geodesic distance in the binary classification setting, which circumvents existing impossibility results on adversarial robustness by exploiting strong concentration of the data. 
In this setting, it can be shown (see \cite[Theorem 1]{goldstein}) that any classifier admits $\epsilon$-adversarial examples for the minority class (say class $1$), with probability at least 
\begin{equation}
1 - \alpha_{q_1} \beta \exp \left(-\frac{n - 1}{2} \epsilon^2\right) ,\label{eq:geodesiclb}
\end{equation}
% \begin{equation}
% 1 - \left(\sup_{x \in \bbS^{n-1}} q_1(x)\right) \cdot \mu_{n-1}(\bbS^{n-1}) \cdot \exp \left(-\frac{n - 1}{2} \epsilon^2\right) \left(\frac{\pi}{8}\right)^{1/2},\label{eq:geodesiclb}
% \end{equation}
where $\alpha_{q_1} = \sup_{x \in \bbS^{n-1}} q_1(x)$ depends on the conditional distribution $q_1$, and $\beta$ is a normalizing constant that depends on the dimension $n$. %$\beta = \mu_{n-1}(\bbS^{n-1}) \cdot \sqrt{\pi/8}$ is a normalizing constant. Here, $\mu_{n-1}$ denotes the $(n-1)$--dimensional volume (surface-area).
Note that this result assumes little about the conditional $q_1$. Now, by constructing a strongly-concentrated data distribution over the domain, we will show that the 
\begin{wrapfigure}{r}{0.3\textwidth}
% \vspace{-1ex}
\centering
\includegraphics[trim={3cm 3cm 1cm 3cm},clip,width=0.3\textwidth]{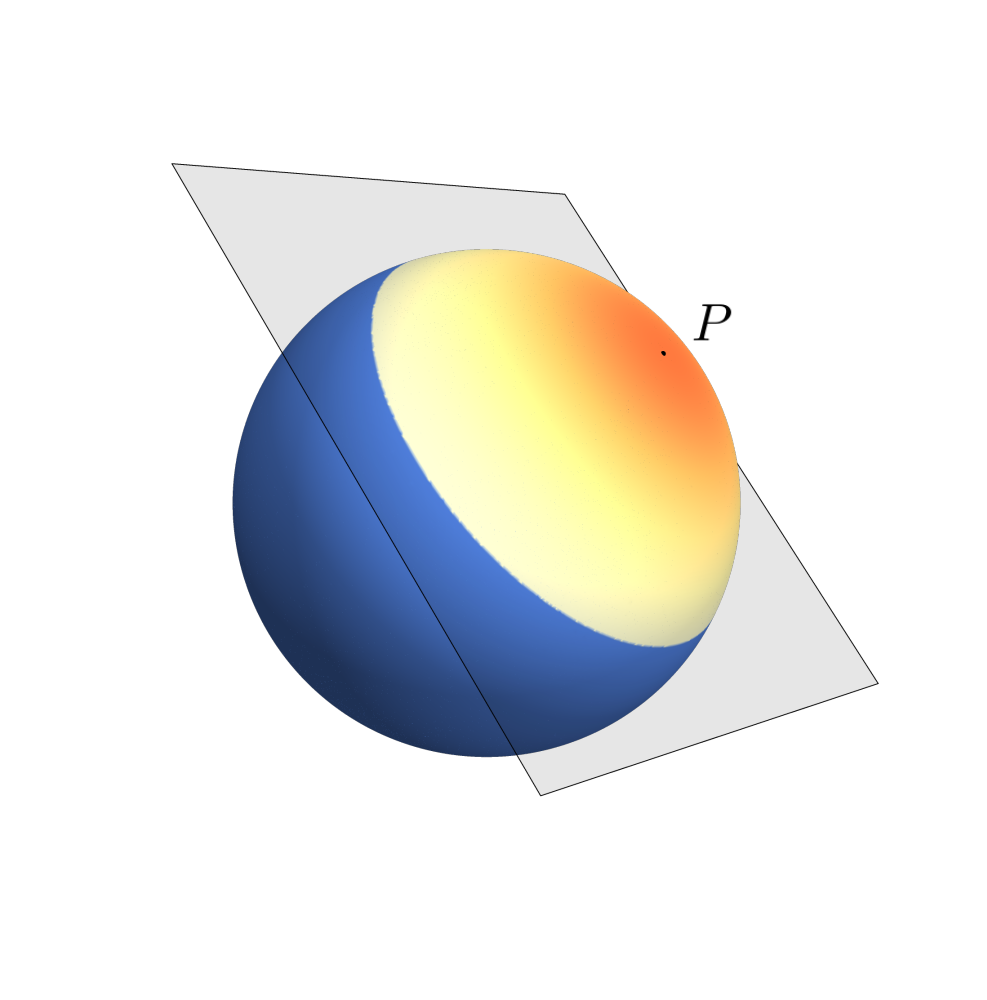}
\caption{A plot of $q_1$. Redder colors denote a larger density, and the gray plane denotes the robust classifier.} \label{fig:escape1}
\vspace{-6ex}
\end{wrapfigure}
lower bound in \eqref{eq:geodesiclb} becomes vacuous.
%\begin{restatable}[]{example}{escape}
\begin{example}
\label{escape}
The data domain is the unit sphere $\bbS^{n-1}$ equipped with the geodesic distance $d$. The label domain is $\{1, 2\}$. $P$ is an arbitrary, but fixed, point lying on $\bbS^{n-1}$. The conditional density of class $1$, i.e., $q_1$ is now defined as 
\begin{gather*}
q_1(x) = \begin{cases}
\frac{1}{C} \frac{1}{\sin^{n-2} d(x, P)}, &\text{ if } d(x, P) \leq 0.1 \\
0, &\text{ otherwise }
\end{cases},
\end{gather*}
where $C = 0.1$ is a normalizing constant. The conditional density of class 2 is defined to be uniform over the complement of the support of $q_1$, i.e. $q_2 = {\rm Unif}(\{x \in \bbS^{n-1} \colon d(x, P) > 0.1\})$. Finally, the classes are balanced, i.e., $p_Y(1)= p_Y(2) = 1/2$. 
\end{example}
%\end{restatable}
The data distribution constructed in \cref{escape} makes \cref{eq:geodesiclb} vacuous, as the supremum over the density $q_1$ is unbounded. %Further, as the distribution is strongly-concentrated, a robust classifier should exist. 
Additionally, the linear classifier defined by the half-space $\{x \colon \langle x, P \rangle \leq \cos(0.1)\}$ is robust (\cref{app:escape} provides a derivation of the robust risk, and further comments on generalizing this example). \cref{escape} is plotted for $n = 3$ dimensions in \cref{fig:escape1}.%{\color{teal}AP: Write proof in Appendix and link to it.} 
% \ambartechnical{This paragraph is extremely vague. Add details like parameters of the strong concentration, ``should exist''? but the results earlier were for euclidean distance, parameters of robustness}

% \begin{figure}
% % \begin{restatable}[]{figure}{examples}
% \centering
% \includegraphics[width=0.49\textwidth]{}
% \includegraphics[width=0.49\textwidth]{}
% \caption{Left: A plot of $q_1$ from \cref{escape} in dimension $n = 3$. Redder colors denote a larger density. The gray half-space denotes the robust classifier. Right: A plot of $q_1$(orange) and $q_2$(violet) from \cref{ex:subspaceconc} in dimension $n = 2$, along with the decision regions of the robust classifier $f$. } \label{fig:escape}
% \end{figure}
% \end{restatable}
\paragraph{Compatibility of \cref{th:strongconc} with existing Negative Results} Thus, we see that strongly concentrated distributions are able to circumvent existing impossibility results on the existence of robust classifiers. However, this does \emph{not} invalidate any existing results. Firstly, measure-concentration-based results \cite{goldstein,fawzi,dohmatob} provide non-vacuous guarantees given a \emph{sufficiently flat} (not concentrated) data distribution, and hence do not contradict our results. Secondly, our results are existential and do not provide, in general, an algorithm to \emph{construct} a robust classifier given a strongly-concentrated distribution. Hence, we also do not contradict the existing stream of results on the computational hardness of finding robust classifiers \cite{bubeck,madryA,madryB}. Our positive results are complementary to all such negative results, demonstrating a general class of data distributions where robust classifiers do exist.

For the reminder of this paper, we will look at a specific member of the above class of strongly concentrated data distributions and show how we can practically construct robust classifiers. 

\section{Adversarially Robust Classification on Union of Linear Subspaces} \label{sec:subspace}
The union of subspaces model has been shown to be very useful in classical computer vision for a wide variety of tasks, which include clustering faces under varying illumination, image segmentation, and video segmentation \cite{renebook}. 
Its concise mathematical description often enables the construction and theoretical analysis of algorithms that also perform well in practice. In this section, we will study robust classification on data distributions concentrated on a union of low-dimensional linear subspaces. This data structure will allow us to obtain a non-trivial, practically relevant case where we can show a provable improvement over existing methods for constructing robust classifiers in certain settings. Before delving further, we now provide a simple example (which is illustrated in \cref{fig:escape2}) demonstrating how distributions concentrated about linear subspaces are concentrated precisely in the sense of \cref{def:strongconc}, and therefore allow for the existence of adversarially robust classifiers. 
\begin{wrapfigure}{r}{0.3\textwidth}
\vspace{1ex}
\centering
\includegraphics[width=0.3\textwidth]{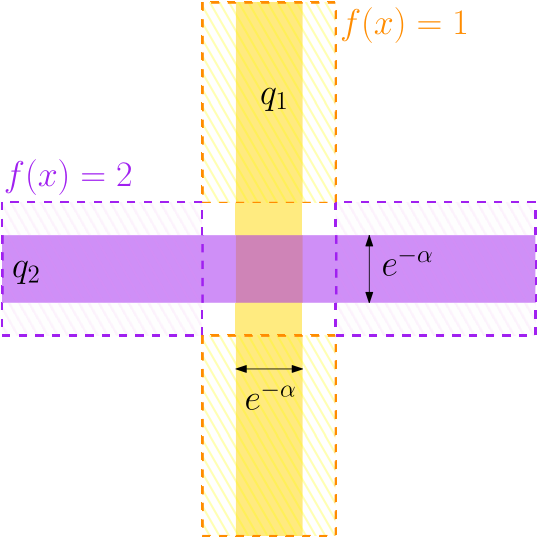}
\caption{A plot of $q_1$ (orange), $q_2$ (violet) and the decision boundaries of $f$ (dashed).} 
\label{fig:escape2}
\vspace{-7.5ex}
\end{wrapfigure}
\begin{restatable}[]{example}{subspaceconc}
%\begin{example}
    The data domain is the ball $B_{\ell_\infty}(0, 1)$ %$ = \{x \colon \|x\|_\infty \leq 1\}$ 
    equipped with the $\ell_2$ distance. The label domain is $\{1, 2\}$. Subspace $S_1$ is given by $S_1 = \{x \colon x^\top e_1 = 0\}$, and $S_2$ is given by $S_2 = \{x \colon x^\top e_2 = 0\}$, where $e_1, e_2$ are the standard unit vectors. The conditional densities are defined as 
    \begin{align*}
        q_1 &= {\rm Unif}(\{x \colon \|x\|_\infty \leq 1, |x^\top e_1| \leq e^{-\alpha}/2\}), \text{ and, } \\
        q_2 &= {\rm Unif}(\{x \colon \|x\|_\infty \leq 1, |x^\top e_2| \leq e^{-\alpha}/2\}),
    \end{align*}
    where $\alpha>0$ is a large constant. Finally, the classes are balanced, \ie, $p_Y(1) = p_Y(2) = 1/2$. %
    With these parameters, $q_1, q_2$ are both $(0.5,\alpha / n - 1, 0)$-concentrated over their respective supports. %$S_1^{+{e^{-\alpha}}} \cap B_{\ell_\infty}(0, 1)$, 
    %and similarly $q_2$ is $(\alpha, 0)$--concentrated over its support., %$S_2^{+{e^{-\alpha}}} \cap B_{\ell_\infty}(0, 1)$. 
    Additionally, %the set $\cQ = \{q_1, q_2\}$
    $p$ is $(\epsilon, 0, e^{-\alpha}/2 + 2\epsilon)$--strongly-concentrated. %\ambartechnical{There might be a 0.5 constant off here}. 
    A robust classifier $f$ can be constructed following the proof of \cref{th:strongconc}, and it obtains a robust accuracy $R(f, \epsilon) \leq e^{-\alpha}/2 + 2\epsilon$. See \cref{app:subspaceconc} for more details. \label{ex:subspaceconc} 
%\end{example}
\end{restatable}

We will now study a specific choice of $p$ that generalizes \cref{ex:subspaceconc} and will let us move beyond the above simple binary setting. Recall that we have a classification problem specified by a data distribution $p$ over the data domain $\cX \times \cY = B(0, 1) \times \{1, 2, \ldots, K\}$. Firstly, the classes are balanced, \ie, $p_Y(k) = 1 / K$ for all $k \in \cY$. Secondly, %for all $k \in \cY$, 
the conditional density, i.e., $q_k = p_{X | Y = k}$, is concentrated %\ambartechnical{Have to derive the parameters of concentration, and specify $q_k$ here}
on the set $S^{+\gamma}_k \cap \cX$, where $S_k$ is a low-dimensional linear subspace, and the superscript denotes an $\gamma$-expansion, for a small $\gamma > 0$.%Thirdly, for simplicity, we will additionally assume that $q_k(S_k \cap S_{k'} \cap \cX) = 0$ for all pairs $(k, k'), k' \neq k$. 

For the purpose of building our robust classifier, we will assume access to a training dataset of $M$ \emph{clean} data points $(\s_1, y_1), (\s_2, y_2), \ldots, (\s_M, y_M)$, such that, for all $i$, the point $\s_i$ lies exactly on one of the $K$ low-dimensional linear subspaces. We will use the notation $\S = [\s_1, \s_2, \ldots, \s_M]$ for the training data matrix and $\y = (y_1, y_2, \ldots, y_M)$ for the training labels. We will assume that $M$ is large enough that every $\x \in \cup_k S_k$ can be represented as a linear combination of the columns of $\S$.

% Then, given $\{(\s_i,y_i) \in \cX \times \cY\}_{i=1}^M$, and a data-point $\x \sim p$, 
Now, the robust classification problem we aim to tackle is to obtain a predictor $g \colon \cX \to \cY$ which %predicts the label $g(\x)$ that 
obtains a low robust risk, with respect to an additive adversary $\mathcal{A}$ that we now define. For any data-point $\x \sim p$, $\mathcal{A}$ will be constrained to make an additive perturbation $v$ such that ${\rm dist}_{\ell_2}(\x + \v, \cup_i S_i) \leq \epsilon$. In other words, the attacked point can have $\ell_2$ distance at most $\epsilon$ from any of the linear subspaces $S_1, \ldots, S_k$. Note that $\mathcal{A}$ is more powerful than an $\ell_2$-bounded adversary as the norm of the perturbation $\|\v\|_2$ might be large, as $\v$ might be parallel to a subspace. %{\color{orange} This is illustrated in \cref{fig:attackmodel}.} \ambarwriting{Make this figure.}

Under such an adversary $\mathcal{A}$, given a (possibly adversarially perturbed) input $\x$, it makes sense to try to recover the corresponding point $\s$ lying on the union of subspaces, such that $\x = \s + \n$, such that $\|\n\|_2 \leq \epsilon$. One way to do this is to represent $\s$ as 
% Given a (possibly adversarially perturbed) input $\x$, we propose to construct our robust classifier by obtaining $\c$ such that the projected, on-subspace counterpart $\s$ can be represented as 
a linear combination of a small number of columns of $\S$, \ie, $\x = \S \c + \n$. 
This can be formulated as an optimization problem that minimizes the cardinality of $c$, given by $\|c\|_0$, subject to an approximation error constraint. Since such a problem is hard because of the $\ell_0$ pseudo-norm, we relax this to the problem
% \begin{equation*}
% \min_\c \| \c \|_0 ~~\text{ s.t. }~~ \|\x - \S \c\|_2 \leq \epsilon \label{eq:noisel0}.
% \end{equation*}
% The above problem is hard due to the $\ell_0$ objective, and we relax it to
\begin{equation}
\min_\c \| \c \|_1 ~~\text{s.t.} ~~\|\x - \S \c\|_2 \leq \epsilon  \label{eq:primalnoiseraw}.
\end{equation}
Under a suitable choice of $\lambda$, this problem can be equivalently written as%
%\footnote{Note that \eqref{eq:primalnoiseraw} is equivalent to \eqref{eq:primalnoise} under a suitable multiplier $\lambda$.} as %\rene{The equivalent formulation is one where $\|e\|\leq \epsilon$. This is either a relaxed formulation, or may be equivalent if $\lambda$ is the optimal dual. AP: True, I skipped this reasoning as it is basic optimization. I will include this as a footnote, since it is causing confusion.}
\begin{equation}
\min_{\c, \e} \| \c \|_1 + \frac{\lambda}{2} \|\e\|_2^2 ~~\text{s.t.} ~~\x = \S \c + \e \label{eq:primalnoise},
\end{equation}
for which we can obtain the dual problem given by
\begin{equation}
\max_{\d} \langle \x, \d \rangle - \frac{1}{2 \lambda} \|\d\|_2^2 ~~\text{s.t.} ~~ \|\S^\top \d \|_\infty \leq 1 \label{eq:dualnoise}. 
\end{equation}

% \paragraph{High Level Idea}
Our main observation is to leverage the stability of the set of active constraints of this dual to obtain a robust classifier. One can note that each constraint of \cref{eq:dualnoise} corresponds to one training data point $s_i$ -- when the $i^{\rm th}$ constraint is active at optimality, $s_i$ is being used to reconstruct $x$. Intuitively, one should then somehow use the label $y_i$ while predicting the label for $x$. Indeed, we will show that predicting the majority label among the active $y_i$ leads to a robust classifier.

We will firstly obtain a geometric characterization of the problem in \cref{eq:dualnoise} by viewing it as the evaluation of a projection operator onto a certain convex set, illustrated in \cref{fig:geom2}. % This view will allow us to obtain a suitable dual classifier  and the corresponding certificates.%\ambar{Compare this figure to the analysis of subspace classification, and have a discussion where one can see how we transition from everything in the support should be correct, to majority in the support should be correct. Done.}
Observe that for $\lambda > 0$, the objective \eqref{eq:dualnoise} is strongly concave in $\d$ and the problem has a unique solution, denoted by $\d^*_\lambda(\x)$. 
% We rewrite \eqref{eq:dualnoise} as
% \begin{equation*}
% \max_{\d} \frac{1}{2\lambda} \|\lambda \x\|_2^2 - \frac{1}{2\lambda} \left( \|\lambda \x\|_2^2 - 2 \langle \lambda \x, \d \rangle + \|\d\|_2^2 \right) ~~\text{ s.t. } ~~\|\S^\top \d \|_\infty \leq 1.
% \end{equation*}
% Ignoring constants, and multiplying throughout by $-2 \lambda$ we have
It is not hard to show that this solution can be obtained by the projection operator
\begin{equation}
\d^*_\lambda(\x) = \left( \argmin_{\d} \|\lambda \x - \d\|_2 ~~\text{s.t. }~~ \|\S^\top \d \|_\infty \leq 1 \right) = {\rm Proj}_{K^\circ}(\lambda \x), \label{eq:optisproj}
\end{equation}
where $K^\circ$ is the polar of the convex hull of $\pm \S$. Denoting $\T = [\S, -\S]$, we can rewrite Problem \eqref{eq:optisproj} as $\d^*_\lambda(\x) = \left( \argmin_{\d} \|\lambda \x - \d\|_2 \text{ sub. to } \T^\top \d \leq \1 \right)$. We now define the set of active constraints as 
\begin{equation} 
A_\lambda(\x) = \{\bt_i \colon \langle \bt_i, \d^*_{\lambda}(\x) \rangle = 1\}. \label{eq:activelambda}
\end{equation}

\begin{wrapfigure}{r}{0.4\textwidth}
\vspace{-3ex}
\centering
\includegraphics[width=0.4\textwidth]{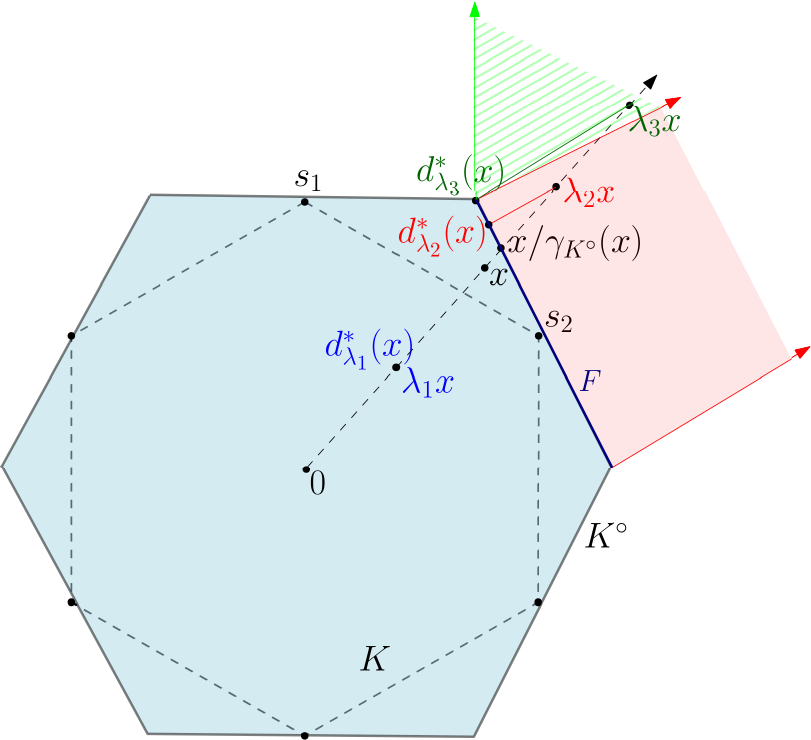}
\caption{Geometry of the dual problem \eqref{eq:dualnoise}. See description on the left.}
\label{fig:geom2}
\vspace{-3ex}
\end{wrapfigure}
\myparagraph{Geometry of the Dual \eqref{eq:dualnoise}} It is illustrated in \cref{fig:geom2}, where $\s_1, \s_2$ are two chosen data-points. The blue shaded polytope is $K^\circ$. At $\lambda = \lambda_1$, the point $\lambda_1 \x$ lies in the interior of $K^\circ$. Hence, $A_\lambda(\x)$ is empty and ${\rm supp}(\c^*(\x))$ is also empty. As $\lambda$ increases, a non-empty support is obtained for the first time at $\lambda = 1 / \gamma_{K^{\circ}}(\x)$.  For all $\lambda_2 \x$ in the red shaded polyhedron, the projection $\d^*_{\lambda_2}(\x) = {\rm Proj}_{K^\circ}(\lambda_2 \x)$ lies on the face $F$. As $\lambda$ increases further we reach the green polyhedron. Further increases in $\lambda$ do not change the dual solution, which will always remain at the vertex $\d^*_{\lambda_3}(\x)$.%This can be seen algebraically as setting $\lambda \to \infty$ in \eqref{eq:dualnoise} is equivalent to 
%Problem \eqref{eq:dualnoise} the same as 
%taking $\epsilon = 0$.

%where again $s_i$ is a sign variable.\rene{This does not define $s_i$. Please write $s_i = sign(\langle x_i , d^*\rangle)$ AP: Fixed.} 
% Note that \eqref{eq:activelambda} is a much simpler definition than \eqref{eq:active} as the set of optimal dual solutions now has a single element. 
Geometrically, $A_\lambda(\x)$ identifies the face of $K^\circ$ which contains the projection of $\lambda \x$, if $A_\lambda(\x)$ is non-empty (otherwise, $\lambda \x$ lies inside the polyhedron $K^\circ$). The support of the primal solution, $\c^*(\x)$, is a subset of $A_\lambda$, \ie~ ${\rm supp}(\c^*(\x)) \subseteq A_\lambda(\x)$. %This geometry is shown in \cref{fig:geom2}. %(see the caption for further details). 
% explains how ${\rm Proj}_{K^\circ}(\lambda \x)$ and $A_\lambda(\x)$ vary as $\lambda$ increases from $0$ to $\infty$.%\paris{It might be better to use the same drawing style of Fig.1 i.e., draw $K$ as well. AP: Agreed, fixed.}. 
Note that whenever two points, say $\x, \x'$, both lie in the same shaded polyhedron (red or green), their projections would lie on the same face of $K^\circ$. We now show this formally, in the main theorem of this section.
\begin{restatable}[]{theorem}{noisyl}
% \begin{theorem} 
\label{lem:noisyl2}
The set of active constraints $A_\lambda$ defined in \eqref{eq:activelambda} is robust, \ie, $A_\lambda(\x') = A_\lambda(\x)$ for all $\lambda \x' \in C(\x)$, where $C(\x)$ is the polyhedron defined as 
\begin{equation}
C(\x) = F(\x) + V(\x), \label{eq:conenoisyl2}
\end{equation}
with $F \subseteq K^\circ$ being a facet of the polyhedron $K^\circ$ that $\x$ projects to, defined as 
\begin{equation} 
F(\x) = \left\{ \d \ \left| \ \begin{aligned}&\bt_i^\top \d = 1, \forall \bt_i \in A_\lambda(\x)\\&\bt_i^\top \d < 1, {\rm otherwise}\end{aligned}\right. \right\}, \label{face}
\end{equation}
and $V$ being the cone generated by the constraints active at (i.e., normal to) $F$, defined  as
\begin{equation}
V(\x) = \left\{ \sum_{\bt_i \in A_\lambda(\x)} \alpha_i \bt_i \colon \alpha_i \geq 0, \forall \bt_i \in A_\lambda(\x) \right\}. \label{cone} %\rene{$S$ or $A_\lambda$? AP: Fixed.}
\end{equation}
% \end{theorem}
\end{restatable}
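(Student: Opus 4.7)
The plan is to invoke the Karush--Kuhn--Tucker (KKT) characterization of the Euclidean projection onto the polyhedron $K^\circ = \{\d : \T^\top \d \leq \1\}$. Namely, $\d$ is the projection of $\y$ onto $K^\circ$ if and only if $\d \in K^\circ$ and $\y - \d$ lies in the normal cone to $K^\circ$ at $\d$, which for a polyhedron equals the conic hull of the constraint normals $\bt_i$ active at $\d$. Combined with the uniqueness of the projection onto a closed convex set, this will be the main engine behind the argument.

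First, I would observe that by the definition \eqref{face} of $F(\x)$, for every $\d \in F(\x)$ the set of active constraints of $K^\circ$ is exactly $A_\lambda(\x)$, so the normal cone to $K^\circ$ at any such $\d$ coincides with $V(\x)$ from \eqref{cone}. Second, given any $\lambda \x' \in C(\x) = F(\x) + V(\x)$, I would fix a decomposition $\lambda \x' = \d + \v$ with $\d \in F(\x)$ and $\v = \sum_{\bt_i \in A_\lambda(\x)} \alpha_i \bt_i$ for some $\alpha_i \ge 0$ (extending $\alpha_i = 0$ for $\bt_i \notin A_\lambda(\x)$), and verify the KKT conditions for projecting $\lambda \x'$ onto $K^\circ$ directly: primal feasibility $\d \in K^\circ$ holds since $F(\x) \subseteq K^\circ$; stationarity reads $\lambda \x' - \d = \sum_i \alpha_i \bt_i$; dual feasibility $\alpha_i \ge 0$ is immediate; and complementary slackness $\alpha_i(\bt_i^\top \d - 1) = 0$ holds because $\bt_i^\top \d = 1$ whenever $\bt_i \in A_\lambda(\x)$ while $\alpha_i = 0$ otherwise. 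By uniqueness of the projection (from strict convexity of the quadratic objective), $\d^*_\lambda(\x') = \d$, and since $\d \in F(\x)$ the definition \eqref{face} immediately yields $A_\lambda(\x') = A_\lambda(\x)$.

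The main subtlety I anticipate is keeping track of the open-versus-closed structure of $F(\x)$: definition \eqref{face} specifies the \emph{relatively open} face on which the active set is exactly $A_\lambda(\x)$, so one must ensure that the decomposition $\lambda \x' = \d + \v$ places $\d$ genuinely in $F(\x)$ rather than on a lower-dimensional sub-face where additional constraints could become active and inflate $A_\lambda$. This is built into the very definition $C(\x) = F(\x) + V(\x)$, which by construction admits such a decomposition. The degenerate case $A_\lambda(\x) = \emptyset$ reduces to $V(\x) = \{\0\}$ and $F(\x) = \mathrm{int}(K^\circ)$, so the claim collapses to the trivial observation that any point in the interior of $K^\circ$ is its own projection, with no active constraints on either side.
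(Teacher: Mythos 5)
Your proof is correct and follows essentially the same route as the paper's: the paper likewise decomposes $\lambda \x' = \f + \v$ with $\f \in F(\x)$ and $\v \in V(\x)$, and its key step --- showing $\langle \z - \f, \v \rangle \le 0$ for all $\z \in K^\circ$ by expanding $\|\z - \lambda \x'\|_2^2$ --- is exactly an inline verification of the normal-cone/KKT optimality condition that you invoke as a known fact before concluding by uniqueness of the projection. Your explicit attention to the relatively open face and the degenerate case $A_\lambda(\x) = \emptyset$ is a nice touch, but the underlying argument is the same.
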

The proof of \cref{lem:noisyl2} utilizes the geometry of the problem and properties of the projection operator, and is presented in \cref{app:noisyl2}. We can now use this result to construct a robust classifier:%, as follows:%
\begin{lemma}
Define the \emph{dual} classifier as
\begin{equation}
g_\lambda(\x) = \textsc{Aggregate}(\{y_i \colon \bt_i \in A_\lambda(\x)\}), \label{eq:g}
\end{equation}
where $\textsc{Aggregate}$ is any deterministic mapping from a set of labels to $\cY$, e.g., \textsc{Majority}. Then, for all $\x' \in C(\x)$ as defined in \cref{lem:noisyl2}, $g_\lambda$ is certified to be robust, i.e., $g_\lambda(\x') = g_\lambda(\x)$. 
\end{lemma}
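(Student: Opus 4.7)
The plan is to observe that this lemma is essentially an immediate corollary of \cref{lem:noisyl2}: the classifier $g_\lambda$ depends on the input $\x$ only through the active-constraint set $A_\lambda(\x)$ and the (fixed) training labels $y_i$, and \cref{lem:noisyl2} already guarantees that $A_\lambda$ is constant on $C(\x)$. So there is essentially no new content to prove; the argument is a chain of three elementary observations.

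First, I would invoke \cref{lem:noisyl2} directly to obtain $A_\lambda(\x') = A_\lambda(\x)$ for every $\x'$ satisfying the hypothesis (i.e., with $\lambda \x'$ in the polyhedron $C(\x)$ defined in \eqref{eq:conenoisyl2}). This active-set stability is the only nontrivial ingredient, and it has already been established.

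Second, since the training labels $\{y_i\}_{i=1}^M$ are fixed ahead of time, the label-set
\begin{equation*}
L(\x) := \{y_i \colon \bt_i \in A_\lambda(\x)\}
\end{equation*}
depends on $\x$ solely through $A_\lambda(\x)$. Thus equality of active sets from the first step immediately yields $L(\x') = L(\x)$.

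Third, I would apply the assumption that $\textsc{Aggregate}$ is a deterministic mapping from sets of labels to $\cY$: applying the same deterministic function to identical inputs produces identical outputs, giving
\begin{equation*}
g_\lambda(\x') = \textsc{Aggregate}(L(\x')) = \textsc{Aggregate}(L(\x)) = g_\lambda(\x),
\end{equation*}
which is the desired certified invariance. There is no real obstacle here, since the geometric heavy lifting has all been absorbed into \cref{lem:noisyl2}; the only subtlety worth flagging is the determinism requirement on $\textsc{Aggregate}$, since a randomized tie-breaking rule on the set of active labels could in principle break the equality even when $L(\x') = L(\x)$, which is exactly why the deterministic assumption is explicitly stated in the lemma.
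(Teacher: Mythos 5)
Your proposal is correct and matches the paper's own (largely implicit) argument: the paper treats this lemma as an immediate corollary of \cref{lem:noisyl2}, noting in \cref{app:expt} that $g_\lambda$ is purely a function of $A_\lambda(\x)$, so stability of the active set gives stability of the prediction. Your explicit three-step chain, including the remark on why determinism of $\textsc{Aggregate}$ is needed, is exactly the intended reasoning.
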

\myparagraph{Implications} Having obtained a certifiably robust classifier $g$, we pause to understand some implications of the theory developed so far. We observe that the certified regions in \cref{lem:noisyl2} are not spherical, \ie, the attacker can make additive perturbations having large $\ell_2$ norm but still be unable to change the label predicted by $g$ (see \cref{illustration}). This is in contrast to the $\ell_2$ bounded certified regions that can be obtained by most existing work on certification schemes, and is a result of modelling data structure while constructing robust classifiers. Importantly, however, note that we do not assume that the attack is restricted to the subspace. %\ambarwriting{Have the figure about conical certificates, and comparison to L2 certificates here.}

\myparagraph{Connections to Classical Results}
For $\epsilon = 0$, \cref{eq:primalnoiseraw} is known as the primal form of the Basis Pursuit problem, and has been studied under a variety of conditions on $\S$ in the sparse representation and subspace clustering literature \cite{foucart2013invitation,donoho2003optimally,soltanolkotabi2012geometric,you2015geometric,li2018geometric,Kaba:ICML21}. Given an optimal solution $\c^*(\x)$ of this basis pursuit problem, how can we accurately predict the label $y$? One ideal situation could be that all columns in the support predict the same label, \ie, $y_i$ is identical for all $i \in \text{supp}(\c^*(\x))$. Indeed, this ideal case is well studied, and is ensured by necessary \cite{Kaba:ICML21} and sufficient \cite{soltanolkotabi2012geometric,you2015geometric,li2018geometric} conditions on the geometry of the subspaces $S_1, \ldots, S_K$. Another situation could be that the \emph{majority} of the columns in the support predict the correct label. In this case, we could predict $\texttt{Majority}(\{y_i \colon i \in {\rm supp}(\c^*(\x))\})$ to ensure accurate prediction. \cref{lem:noisyl2} allows us to obtain robustness guarantees which work for \emph{any} such aggregation function which can determine a single label from the support. Hence, our results can guarantee robust prediction even when classical conditions are not satisfied. Lastly, note that our \cref{lem:noisyl2} shows that the entire active set remains unperturbed -- In light of the recent results in \cite{sulam2020adversarial}, this could be relaxed for specific choices of maps acting on the estimated support. 

\section{Experiments} \label{sec:expt}
In this section, we will compare our certified defense derived in \cref{sec:subspace} to a popular defense technique called Randomized Smoothing (RS) \cite{cohen2019certified}, which can be used to obtain state-of-the-art certified robustness against $\ell_2$ perturbations. RS transforms any given classifier $f \colon \cX \to \cY$ to a certifiably robust classifier $g^{\rm RS}_\sigma \colon \cX \to \cY$ by taking a majority vote over inputs perturbed by Gaussian\footnote{In reality, the choice of the noise distribution is central to determining the type of certificate one can obtain \cite{pal2023understanding,yang2020randomized}, but Gaussian suffices for our purposes here.} noise $\mathcal{N}(0, \sigma^2 I)$, \ie, 
\begin{equation}
    g^{\rm RS}_\sigma(x) = {\rm Smooth}_\sigma(f) = \argmax_{k \in \cY} \Pr_{v \sim \mathcal{N}(0, \sigma^2 I)} (f(x + v) = k). \label{smoothing}
\end{equation}
Then, at any point $x$, $g^{\rm RS}_\sigma$ can be shown to be certifiably robust to $\ell_2$ perturbations of size at least $r^{\rm RS}(x) = \sigma \Phi^{-1}(p)$ where $p = \max_{k \in \cY} \Pr_{v \sim \mathcal{N}(0, \sigma^2 I)} (f(x + v) = k)$ denotes the maximum probability of any class under Gaussian noise.

% \paragraph{Challenges in Comparing Certificates}
% \begin{wrapfigure}{r}{0.4\textwidth}
\begin{figure}[h!]
    % \vspace{-4ex}
    \centering
    \includegraphics[width=0.4\textwidth]{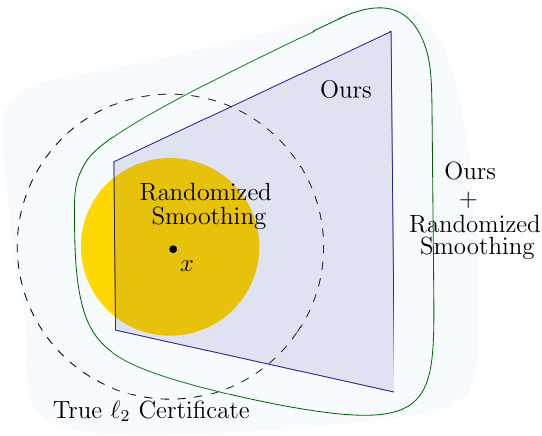}
    \caption{Comparing polyhedral and spherical certificates. Details in text.}
    \label{illustration}
    \vspace{-4mm}
\end{figure}
% \end{wrapfigure}
It is not immediately obvious how to compare the certificates provided by our method described above and that of RS, since the sets of the space they certify are different. The certified region obtained by RS, $C^{\rm RS}(x) = \{\bar x \colon \|x - \bar x\|_2 \leq r(x)\}$, is a sphere (orange ball in \cref{illustration}). In contrast, our certificate $C_\lambda(x)$ from \cref{lem:noisyl2} is a polyhedron (resp., blue trapezoid), which, in general, is neither contained in $C^{\rm RS}(x)$, nor a superset of $C^{\rm RS}(x)$. Additionally, our certificate has no standard notion of \emph{size}, unlike other work on elliptical certificates \cite{eirasancer}, making a size comparison non-trivial. To overcome these difficulties, we will evaluate two notions of \emph{attack size}: in the first, we will compare the $\ell_2$ norms of successful attacks projected onto our polyhedron, % (here we expect $C_\lambda(x)$ to be ``better'' than $C^{\rm RS}(x)$),
and in the second, we will compare the minimum $\ell_2$ norm required for a successful attack. %(here we expect $C_\lambda(x)$ to be ``worse'' than $C^{\rm RS}(x)$). 
We will then combine our method with RS to get the best of both worlds, \ie, the green shape in \cref{illustration}. In the following, we present both these evaluations on the MNIST \cite{lecun} dataset, with each image normalized to unit $\ell_2$ norm.

\myparagraph{Comparison along Projection on $C_\lambda(x)$} For the first case, we investigate the question: \emph{Are there perturbations for which our method is certifiably correct, but Randomized Smoothing fails?} For an input point $x$, we can answer this question in the affirmative by obtaining an adversarial example $\bar x$ for $g^{\rm RS}$ such that $\bar x$ lies inside our certified set $C_\lambda(x)$. Then, this perturbation $v = x - \bar x$ is certified by our method, but has $\ell_2$ norm larger than $r^{\rm RS}(x)$ (by definition of the RS certificate).

To obtain such adversarial examples, we first
% examples $\bar x \in C_\lambda(x)$ such that the $\ell_2$ norm of $v = x - \bar x$ is greater than the certified radius given by RS, \ie, $\|v\|_2 > r^{\rm RS}(x)$. First, we 
train a standard CNN classifier $f$ for MNIST, and then use RS\footnote{\label{note1} Further details (e.g., smoothing parameters, certified accuracy computation) are provided in \cref{app:expt}.} to obtain the classifier $g^{\rm RS}_\sigma$. Then, for any $(x, y)$, we perform projected gradient descent to
% to find an adversarially perturbed point $\bar x$ lying in our certified region $C_\lambda(x)$, such that $\|\x - \bar \x\|_2 \leq \epsilon$. 
% More specifically, we
obtain $\bar x = x^T$ by performing the following steps $T$ times, 
starting with $x^0 \gets x$:
\begin{gather}
\rom{1.}\  x^t \gets {\rm Proj}_{B_{\ell_2}(x, \epsilon)}\Big(x^{t-1} + \eta \nabla_x {\rm Loss}(g^{\rm RS}_\sigma(x^t), y)\Big) \qquad \rom{2.}\  x^t \gets {\rm Proj}_{C_\lambda(x)} (x^t) \label{pgdmod}
\end{gather}
% \begin{gather*}
% \rom{1.}\  u^t \gets x^{t-1} + \eta \nabla_x g^{\rm RS}(x^t) \quad \rom{2.}\  v^t \gets u^t - x \quad
% \rom{3.}\  v^t \gets (\epsilon / \|v^t\|_2) v^t \\ \rom{4.}\  x^t \gets {\rm Proj}_{C_\lambda(x)} (x + v^t).
% \end{gather*}
% (1) $u^t \gets x^{t-1} + \eta \nabla_x g^{\rm RS}(x^t)$, (2) $v^t \gets u^t - x$, (3) $v^t \gets (\epsilon / \|v^t\|_2) v^t$, and (4) $x^t \gets {\rm Proj}_{C_\lambda(x)} (x + v^t)$.
Unlike the standard PGD attack (step \rom{1}), the additional projection (step \rom{2}) is not straightforward, and requires us to solve a quadratic optimization problem, which can be found in \cref{app:expt}. %\ambarwriting{Look at all mentions of Appendix, and write which section in the Appendix exactly}%
We can now evaluate $g^{\rm RS}_\sigma$ on these perturbations to empirically estimate the robust accuracy over $C_\lambda$, \ie, 
% With these perturbations, we plot an upper-bound on the robust accuracy of $g^{\rm RS}$ over our certified set $C_\lambda(x)$. More formally, we plot an empirical estimate of the quantity
\begin{equation*}
    {\rm ProjectionRobustAcc}(\epsilon) = \Pr_{x, y \sim p_{\rm MNIST}} \Big( \exists \bar x \in B_{\ell_2}(x, \epsilon) \cap C_\lambda(x) \text{ such that } g^{\rm RS}(\bar x) \neq y \Big).
\end{equation*}

\begin{wrapfigure}{r}{0.4\textwidth}
    \vspace{-4ex}
    \centering
    \includegraphics[width=0.4\textwidth]{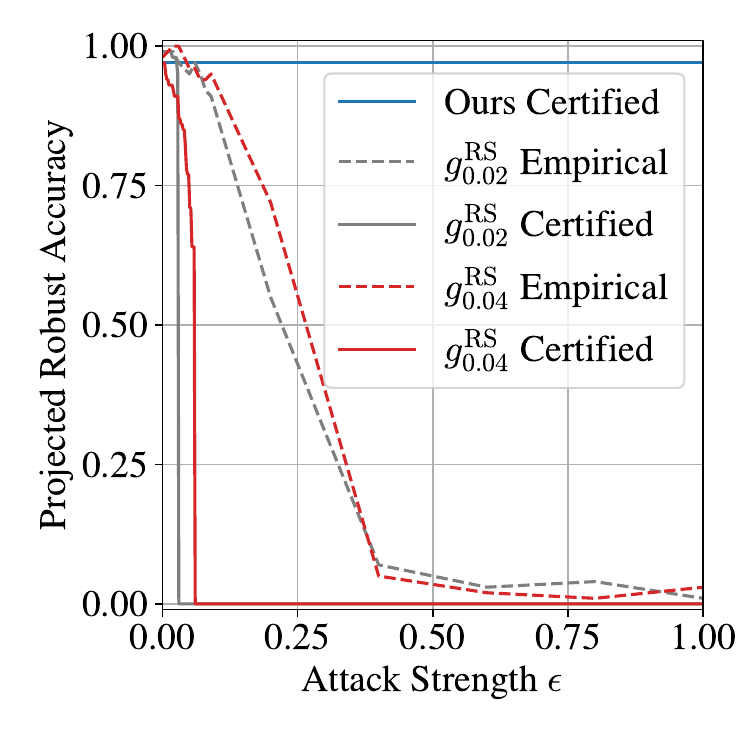}
    \vspace{-5ex}
    \caption{Comparing RS with Our method for adversarial perturbations computed by repeating Steps I, II \eqref{pgdmod}.}
    \label{fig:comparison1}
    \vspace{-5ex}
\end{wrapfigure}
The results are plotted in \cref{fig:comparison1}, as the dotted curves. We also  plot the certified accuracies\cref{note1} for comparison, as the solid curves. We see that the accuracy certified by RS drops below random chance (0.1) around $\epsilon = 0.06$ (solid red curve). Similar to other certified defenses, RS certifies only a subset of the true robust accuracy of a classifier in general. This true robust accuracy curve is pointwise upper-bounded by the empirical robust accuracy curve corresponding to any attack, obtained via the steps $\rom{1},\rom{2}$ described earlier (dotted red curve). We then see that even the upper-bound drops below random chance around $\epsilon = 0.4$, suggesting that this might be a large enough attack strength so that an adversary only constrained in $\ell_2$ norm is able to fool a general classifier. However, we are evaluating attacks lying on our certified set and it is still possible to recover the true class (blue solid curve), albeit by our specialized classifier $g_\lambda$ suited to the data structure. Additionally, this suggests that our certified set contains useful class-specific information -- this is indeed true, and we present some qualitative examples of images in our certified set in \cref{app:expt}. \begin{wrapfigure}{r}{0.4\textwidth}
    \vspace{-10pt}
    \centering
    \includegraphics[width=0.4\textwidth]{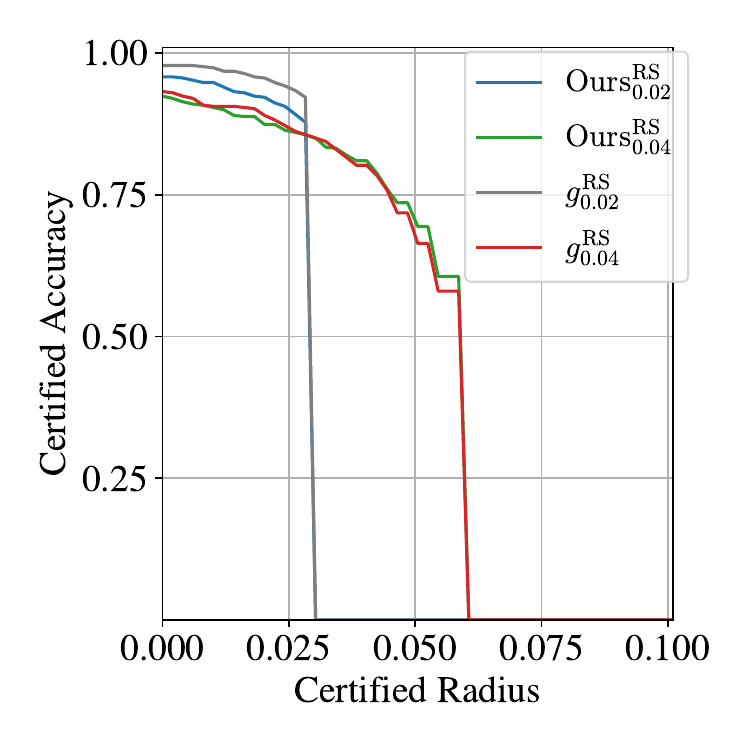}
    \vspace{-20pt}
    \caption{Comparing Certified Accuracy after combining our method with RS.}
    \label{fig:comparison2}
    % \vspace{-40pt}
    \vspace{-15pt}
\end{wrapfigure} To summarize, we have numerically demonstrated that \emph{exploiting data structure in classifier design leads to certified regions capturing class-relevant regions beyond $\ell_p$-balls}.

\myparagraph{Comparison along $\ell_2$ balls} For the second case, we ask the question: \emph{Are there perturbations for which RS is certifiably correct, but our method is not?} When an input point $x$ has a large enough RS certificate $r^{\rm RS}(x) \geq r_0$, some part of the sphere $B_{\ell_2}(x, r^{\rm RS}(x))$ might lie outside our polyhedral certificate $C_\lambda(x)$ (blue region in \cref{illustration}). In theory, the minimum $r_0$ required can be computed via an expensive optimization program that we specify in \cref{app:expt}. In practice, however, we use a black-box attack \cite{chen2020hopskipjumpattack} to find such perturbations. We provide qualitative examples and additional experiments on CIFAR-10 in \cref{app:expt}. %\ambarwriting{Link to appendix once these examples are there}%

\myparagraph{Combining Our Method with RS}
We now improve our certified regions using randomized smoothing. For this purpose, we treat our classifier $g_\lambda$ as the base classifier $f$ in \eqref{smoothing}, to obtain ${\rm Smooth_\sigma(g_\lambda)}$ (abbreviated as ${\rm Ours}^{\rm RS}_\sigma$). We then plot the $\ell_2$ certified accuracy\cref{note1} \cite[Sec 3.2.2]{cohen2019certified} in \cref{fig:comparison2}, where note that, as opposed to \cref{fig:comparison1}, the attacks are \emph{not} constrained to lie on our certificate anymore. %
%We observe that not only does randomized smoothing enable us to obtain a $\ell_2$ robustness certificate, we can in fact slightly improve (note the $10\times$ zoom on the $x$-axis compared to \cref{fig:comparison1}) over the certificates obtained by RS on its own in some regimes (green curve vs. red curve in \cref{fig:comparison2}). 

As a final remark, we note that our objective in \cref{fig:comparison2} was simply to explore RS as a method for obtaining an $\ell_2$ certificate for our method, and we did not tune our method or RS for performance. In particular, we believe that a wide array of tricks developed in the literature for improving RS performance \cite{pal2023understanding,smoothadv,yang2020randomized} could be employed to improve the curves in \cref{fig:comparison2}. We now discuss our work in the context of existing literature in adversarial robustness, and sparse representation learning. %
\section{Discussion and Related Work}
\label{sec:relwork}
% \ambar{Include References to Ben thesis here.}
%For our concentration results, 
Our proof techniques utilize tools from high-dimensional probability, and have the same flavor as recent impossibility results for adversarial robustness \cite{dohmatob,goldstein,madryB}. Our geometric treatment of the dual optimization problem %\ambarwriting{Since the paper now heavily talks about data concentration, it makes sense to have a greater emphasis on those works here} 
is similar to the literature on sparse-representation \cite{donoho2003optimally,foucart2013invitation} and subspace clustering \cite{soltanolkotabi2012geometric,soltanolkotabi2014robust,you2015geometric,Kaba:ICML21}, which is concerned with the question of representing a point $\x$ by the linear combination of columns of a dictionary $\S$ using sparse coefficients $\c$. As mentioned in \cref{sec:subspace}, there exist geometric conditions on $\S$ such that all such candidate vectors $\c$ are \emph{subspace-preserving}, i.e., for all the indices $i$ in the support of $\c$, it can be guaranteed that $\s_i$ belongs to the correct subspace. On the other hand, the question of classification of a point $\x$ in a union of subspaces given by the columns of $\S$, or subspace classification, has also been studied extensively in classical sparse-representation literature \cite{wright2008robust,wright2010sparse,cappelli2002subspace,laaksonen1997subspace,yu2015semi,yu2015semi,yu2015semi,naseem2010linear,hui2012sparse,li2013nearest}. The predominant approach is to solve an $\ell_1$ minimization problem to obtain coefficients $\c$ so that $\x = \S \c + \e$, and then predict the subspace that minimizes the representation error. Various \emph{global} conditions can be imposed on $\S$ to guarantee the success of such an approach \cite{wright2008robust}, and its generalizations \cite{elhamifar2011robust,elhamifar2012block}. Our work differs from these approaches in that we aim to obtain conditions on perturbations to $\x$ that ensure accurate classification, and as such we base our robust classification decision upon properties of solutions of the dual of the $\ell_1$ problem.

Our results are complementary to \cite{pydi2020adversarial,bhagoji2019lower}, who % and are derived in a more general multi-class setting. \cite{pydi2020adversarial,bhagoji2019lower}
obtain a %nice 
lower bound on the robust risk $R(f, \epsilon)$, in a binary classification setting, in terms of the Wasserstein distance $D$ between the class conditionals $q_0$ and $q_1$, \ie, $R(f, \epsilon) \geq 1 - D(q_0, q_1)$. Using this result, \cite{pydi2020adversarial,bhagoji2019lower} roughly state that the robust risk increases as the class conditionals get closer, \ie, it becomes more likely to sample $x \sim q_0, x' \sim q_1$, such that $\|x - x'\| \leq \epsilon$. In comparison, for two classes, our \cref{th:strongconc} roughly states that for low robust risk, $q_0$ and $q_1$ should be concentrated on small-volume subsets separated from one another. Thus, the message is similar, while our \cref{th:strongconc} is derived in a more general multi-class setting. Note that \cite{pydi2020adversarial,bhagoji2019lower} do not explicitly require small-volume subsets, but this is implicit, as $1 - D(q_0, q_1)$ is large due to concentration of measure when $q_0, q_1$ do not concentrate on very small subsets. We provide an analogue of the empirical results of \cite{pydi2020adversarial,bhagoji2019lower} in \cref{app:cifar10lbcomparison}.

Our notion of concentration is also related to concentration of a measure $\mu$, as considered in \cite{mahloujifar2019curse,mahloujifar2019empirically,zhang2021understanding}, which denotes how much the $\mu$-measure of any set \emph{blows up} after expansion by $\epsilon$. %(i.e., the minimum $\mu(A^{+\epsilon})$ over all sets $A$ with $\mu(A) \geq 0.5$). 
Under this definition, the uniform measure has a high degree of concentration in high dimensions, and this is called the concentration of measure phenomenon. In contrast, our \cref{def:conc} of concentrated data distributions can be seen as a \emph{relative} notion of concentration with respect to the uniform measure $\mu$, in that we call a class conditional $q$ concentrated when it assigns a large $q$-measure to a set of very small volume, (\ie, $q(S)$ is high whereas $\mu(S)$ is very low). In essence, \cref{def:conc} defines concentration relative to the uniform distribution, whereas \cite{mahloujifar2019curse,mahloujifar2019empirically,zhang2021understanding} define concentration independently. \cref{def:conc} is useful in the context of adversarial robustness as it allows us to separate the concentration of data distributions (which is unknown) from the concentration of the uniform measure in high dimensions (for which there is a good understanding). This allows us to derive results in the non-realizable setting, where errors are measured against a probabilistic ground truth label $Y$, which is strictly more general than the realizable setting which requires a deterministic ground truth classifier $f^*$. As a result of this realizable setting, the analysis in \cite{gilmer2018relationship,mahloujifar2019curse,mahloujifar2019empirically,zhang2021understanding} needs to assume a non-empty error region $A$ for the learnt classifier $g$ with respect to $f^*$, in order to reason about $\epsilon$-expansions $A^{+\epsilon}$. The results in \cite{mahloujifar2019curse,mahloujifar2019empirically} indicate that the robust risk grows quickly as the volume of $A$ increases. However, humans seem to be a case where the natural accuracy is not perfect (e.g., we might be confused between a $6$ and a poorly written $5$ in MNIST), yet we seem to be very robust against small $\ell_2$ perturbations. This points to a slack in the analysis in \cite{mahloujifar2019curse,mahloujifar2019empirically}, and our work fills this gap by considering $\epsilon$ expansions of a different family of sets.
 
Finally, our work is also related to recent empirical work obtaining robust classifiers by \emph{denoising} a given input $\x$ of any adversarial corruptions, before passing it to a classifier \cite{samangouei2018defense,moosavi2018divide}. However, such approaches lack theoretical guarantees, and might be broken using specialized attacks \cite{niu2020limitations}. Similarly, work on improving 
the robustness of deep network-based classifiers by adversarial training off the data-manifold can be seen as an empirical generalization of our attack model \cite{jin2020manifold,moosavi2019robustness,zhang2021manifold,lin2020dual}. More generally, it has been studied how adversarial examples relate to the underlying data-manifold \cite{stutz2019disentangling,khoury2018geometry,ma2018characterizing}.
Recent work also studies the robustness of classification using projections onto a single low-dimensional linear subspace \cite{awasthi2020adversarial,awasthi2021adversarially,somayeh}. %Such works are close to us in spirit, as they use projections onto a single low-dimensional linear subspace to provide robustness certificates.
The work in \cite{awasthi2020adversarial} studies an the attack model of bounded $\ell_2, \ell_\infty$ attacks, and they provide robustness certificates by obtaining guarantees on the distortion of a data-point $\x$ as it is projected onto a single linear subspace using a projection matrix $\Pi$. In contrast, our work can be seen as projecting a perturbed point onto a union of multiple low-dimensional subspaces. The resultant richer geometry allows us to obtain more general certificates.% that are unbounded by any $\ell_p$ norm. 
%
%\rene{Connecting within subspace and out of subspace perturbations to within manifold and out-of manifold work would be appropriate}

\section{Conclusion and Future Work}
\label{sec:conclusion}
To conclude, we studied conditions under which a robust classifier exists for a given classification task. We showed that concentration of the data distribution on small subsets of the input space is  necessary for any classifier to be robust to small adversarial perturbations, and that a stronger notion of concentration is sufficient. We then studied a special concentrated data distribution, that of data distributed near low-dimensional linear subspaces. For this special case of our results, we constructed a provably robust classifier, and then experimentally evaluated its benefits w.r.t. known techniques. %this paper, we studied the question of adversarial robustness for a classification task where the training data lies on a union of low-dimensional linear subspaces. In this setting, we constructed robust classifiers for the unbounded, on-subspace and out-of-subspace attack models, and obtained their associated robustness guarantees. We then applied the resultant theory 
%developed 
% to classical $\ell_2$ bounded attacks and gave optimization algorithms for computing the certified radii. 

%One of the limitations of our work is that
For the above special case, we assume access to a \emph{clean} dataset $\S$ lying perfectly on the union of low-dimensional linear subspaces, while in reality one might only have access to noisy samples. In light of existing results on noisy subspace clustering \cite{wang2013noisy}, an immediate future direction is to adapt our guarantees to support noise in the training data. Similarly, while the assumption of low-dimensional subspace structure in $\S$ enables us to obtain novel unbounded robustness certificates, real world datasets might not satisfy this structure. We hope to mitigate this limitation by extending our formulation to handle data lying on a general image manifold in the future.

%\section{Practical Applicability of the Subspace-Threat Models}
%In this section, we will provide some qualitative examples of adversarial attacks covered by our threat models, and demonstrate their applicability on MNIST. Even though MNIST is considered a ``toy'' dataset compared to modern classification datasets in computer vision, the task of obtaining provable guarantees for MNIST is far from being solved. Consider Jeong et. al.\cite{jeong2020consistency}, who obtain the best provable guarantees on MNIST to the best of our knowledge. At a perturbation of size $1.5$ in the $\ell_2$ norm, they are able to certify an accuracy of $82.2\%$, and this number drops to $28.0\%$ at a perturbation size $2$. For comparison, a few images with $\ell_2$ perturbations of size $2$ and $1.5$ are shown in Figure \cref{fig:comparison1}. It should be clear that as humans we are able to quite easily predict the correct digit at these perturbation levels, while certified methods barely work. 
%{\color{blue} In our paper, all the inputs are normalized to $[0, 1]$, and the $\epsilon$ values cannot be directly compared.}
%
%The situation is even more dire when we consider semantic perturbations, which do not necessarily have a small $\ell_p$ norm. 

\begin{ack}
We would like to thank Amitabh Basu for helpful insights into the optimization formulation for the largest $\ell_2$ ball contained in a polyhedron. This work was supported by DARPA grant HR00112020010, NSF grants 1934979 and 2212457, and the NSF-Simons Research Collaboration on the Mathematical and Scientific Foundations of Deep Learning (NSF grant 2031985, Simons grant 814201).

\end{ack}

\bibliographystyle{plain}
\bibliography{refs}

\begin{thebibliography}{10}

\bibitem{carliniA}
Anish Athalye, Nicholas Carlini, and David Wagner.
\newblock Obfuscated gradients give a false sense of security: Circumventing
  defenses to adversarial examples.
\newblock In {\em International conference on machine learning}, pages
  274--283. PMLR, 2018.

\bibitem{awasthi2021adversarially}
Pranjal Awasthi, Vaggos Chatziafratis, Xue Chen, and Aravindan Vijayaraghavan.
\newblock Adversarially robust low dimensional representations.
\newblock In {\em Conference on Learning Theory}, pages 237--325. PMLR, 2021.

\bibitem{awasthi2020adversarial}
Pranjal Awasthi, Himanshu Jain, Ankit~Singh Rawat, and Aravindan
  Vijayaraghavan.
\newblock Adversarial robustness via robust low rank representations.
\newblock {\em Advances in Neural Information Processing Systems},
  33:11391--11403, 2020.

\bibitem{basu2019introduction}
Amitabh Basu.
\newblock Introduction to convexity.
\newblock {\em Optimization}, 52:37, 2019.

\bibitem{bhagoji2019lower}
Arjun~Nitin Bhagoji, Daniel Cullina, and Prateek Mittal.
\newblock Lower bounds on adversarial robustness from optimal transport.
\newblock {\em Advances in Neural Information Processing Systems}, 32, 2019.

\bibitem{bubeck}
S{\'e}bastien Bubeck, Yin~Tat Lee, Eric Price, and Ilya Razenshteyn.
\newblock Adversarial examples from computational constraints.
\newblock In {\em International Conference on Machine Learning}, pages
  831--840. PMLR, 2019.

\bibitem{cappelli2002subspace}
Raffaele Cappelli, Dario Maio, and Davide Maltoni.
\newblock Subspace classification for face recognition.
\newblock In {\em International Workshop on Biometric Authentication}, pages
  133--141. Springer, 2002.

\bibitem{carliniB}
Nicholas Carlini, Anish Athalye, Nicolas Papernot, Wieland Brendel, Jonas
  Rauber, Dimitris Tsipras, Ian Goodfellow, Aleksander Madry, and Alexey
  Kurakin.
\newblock On evaluating adversarial robustness.
\newblock {\em arXiv preprint arXiv:1902.06705}, 2019.

\bibitem{chen2020hopskipjumpattack}
Jianbo Chen, Michael~I Jordan, and Martin~J Wainwright.
\newblock Hopskipjumpattack: A query-efficient decision-based attack.
\newblock In {\em 2020 ieee symposium on security and privacy (sp)}, pages
  1277--1294. IEEE, 2020.

\bibitem{chiang2020certified}
Ping-yeh Chiang, Renkun Ni, Ahmed Abdelkader, Chen Zhu, Christoph Studer, and
  Tom Goldstein.
\newblock Certified defenses for adversarial patches.
\newblock In {\em 8th International Conference on Learning Representations
  (ICLR 2020)(virtual)}. International Conference on Learning Representations,
  2020.

\bibitem{cohen2019certified}
Jeremy Cohen, Elan Rosenfeld, and Zico Kolter.
\newblock Certified adversarial robustness via randomized smoothing.
\newblock In {\em International Conference on Machine Learning}, pages
  1310--1320. PMLR, 2019.

\bibitem{dohmatob}
Elvis Dohmatob.
\newblock Generalized no free lunch theorem for adversarial robustness.
\newblock In {\em International Conference on Machine Learning}, pages
  1646--1654. PMLR, 2019.

\bibitem{donoho2003optimally}
David~L Donoho and Michael Elad.
\newblock Optimally sparse representation in general (nonorthogonal)
  dictionaries via l1 minimization.
\newblock {\em Proceedings of the National Academy of Sciences},
  100(5):2197--2202, 2003.

\bibitem{eirasancer}
Francisco Eiras, Motasem Alfarra, Philip Torr, M~Pawan Kumar, Puneet~K Dokania,
  Bernard Ghanem, and Adel Bibi.
\newblock Ancer: Anisotropic certification via sample-wise volume maximization.
\newblock {\em Transactions of Machine Learning Research}.

\bibitem{elhamifar2011robust}
Ehsan Elhamifar and Ren{\'e} Vidal.
\newblock Robust classification using structured sparse representation.
\newblock In {\em CVPR 2011}, pages 1873--1879. IEEE, 2011.

\bibitem{elhamifar2012block}
Ehsan Elhamifar and Ren{\'e} Vidal.
\newblock Block-sparse recovery via convex optimization.
\newblock {\em IEEE Transactions on Signal Processing}, 60(8):4094--4107, 2012.

\bibitem{humanadversarials}
Gamaleldin Elsayed, Shreya Shankar, Brian Cheung, Nicolas Papernot, Alexey
  Kurakin, Ian Goodfellow, and Jascha Sohl-Dickstein.
\newblock Adversarial examples that fool both computer vision and time-limited
  humans.
\newblock {\em Advances in neural information processing systems}, 31, 2018.

\bibitem{fawzi}
Alhussein Fawzi, Hamza Fawzi, and Omar Fawzi.
\newblock Adversarial vulnerability for any classifier.
\newblock {\em Advances in neural information processing systems}, 31, 2018.

\bibitem{fischer2020certified}
Marc Fischer, Maximilian Baader, and Martin Vechev.
\newblock Certified defense to image transformations via randomized smoothing.
\newblock {\em Advances in Neural Information Processing Systems},
  33:8404--8417, 2020.

\bibitem{foucart2013invitation}
Simon Foucart and Holger Rauhut.
\newblock An invitation to compressive sensing.
\newblock In {\em A mathematical introduction to compressive sensing}, pages
  1--39. Springer, 2013.

\bibitem{gilmer2018relationship}
Justin Gilmer, Luke Metz, Fartash Faghri, Samuel~S Schoenholz, Maithra Raghu,
  Martin Wattenberg, Ian Goodfellow, and G~Brain.
\newblock The relationship between high-dimensional geometry and adversarial
  examples.
\newblock {\em arXiv preprint arXiv:1801.02774}, 2018.

\bibitem{guo2018countering}
Chuan Guo, Mayank Rana, Moustapha Cisse, and Laurens van~der Maaten.
\newblock Countering adversarial images using input transformations.
\newblock In {\em International Conference on Learning Representations}, 2018.

\bibitem{hui2012sparse}
Kang-hua Hui, Chun-li Li, and Lei Zhang.
\newblock Sparse neighbor representation for classification.
\newblock {\em Pattern Recognition Letters}, 33(5):661--669, 2012.

\bibitem{jin2020manifold}
Charles Jin and Martin Rinard.
\newblock Manifold regularization for locally stable deep neural networks.
\newblock {\em arXiv preprint arXiv:2003.04286}, 2020.

\bibitem{Kaba:ICML21}
Mustafa~D Kaba, Chong You, Daniel~P Robinson, Enrique Mallada, and Rene Vidal.
\newblock A nullspace property for subspace-preserving recovery.
\newblock In Marina Meila and Tong Zhang, editors, {\em Proceedings of the 38th
  International Conference on Machine Learning}, volume 139 of {\em Proceedings
  of Machine Learning Research}, pages 5180--5188. PMLR, 18--24 Jul 2021.

\bibitem{khoury2018geometry}
Marc Khoury and Dylan Hadfield-Menell.
\newblock On the geometry of adversarial examples.
\newblock {\em arXiv preprint arXiv:1811.00525}, 2018.

\bibitem{laaksonen1997subspace}
Jorma Laaksonen et~al.
\newblock {\em Subspace classifiers in recognition of handwritten digits}.
\newblock Helsinki University of Technology, 1997.

\bibitem{lecun}
Yann LeCun.
\newblock The mnist database of handwritten digits.
\newblock {\em http://yann.lecun.com/exdb/mnist/}, 1998.

\bibitem{levine2020randomized}
Alexander Levine and Soheil Feizi.
\newblock (de) randomized smoothing for certifiable defense against patch
  attacks.
\newblock {\em Advances in Neural Information Processing Systems},
  33:6465--6475, 2020.

\bibitem{li2018geometric}
Chun-Guang Li, Chong You, and Ren{\'e} Vidal.
\newblock On geometric analysis of affine sparse subspace clustering.
\newblock {\em IEEE Journal of Selected Topics in Signal Processing},
  12(6):1520--1533, 2018.

\bibitem{li2010concise}
Shengqiao Li.
\newblock Concise formulas for the area and volume of a hyperspherical cap.
\newblock {\em Asian Journal of Mathematics \& Statistics}, 4(1):66--70, 2010.

\bibitem{li2013nearest}
Wei Li, Eric~W Tramel, Saurabh Prasad, and James~E Fowler.
\newblock Nearest regularized subspace for hyperspectral classification.
\newblock {\em IEEE Transactions on Geoscience and Remote Sensing},
  52(1):477--489, 2013.

\bibitem{lin2020dual}
Wei-An Lin, Chun~Pong Lau, Alexander Levine, Rama Chellappa, and Soheil Feizi.
\newblock Dual manifold adversarial robustness: Defense against lp and non-lp
  adversarial attacks.
\newblock {\em Advances in Neural Information Processing Systems},
  33:3487--3498, 2020.

\bibitem{ma2018characterizing}
Xingjun Ma, Bo~Li, Yisen Wang, Sarah~M Erfani, Sudanthi Wijewickrema, Grant
  Schoenebeck, Dawn Song, Michael~E Houle, and James Bailey.
\newblock Characterizing adversarial subspaces using local intrinsic
  dimensionality.
\newblock In {\em International Conference on Learning Representations}, 2018.

\bibitem{mahloujifar2019curse}
Saeed Mahloujifar, Dimitrios~I Diochnos, and Mohammad Mahmoody.
\newblock The curse of concentration in robust learning: Evasion and poisoning
  attacks from concentration of measure.
\newblock In {\em Proceedings of the AAAI Conference on Artificial
  Intelligence}, volume~33, pages 4536--4543, 2019.

\bibitem{mahloujifar2019empirically}
Saeed Mahloujifar, Xiao Zhang, Mohammad Mahmoody, and David Evans.
\newblock Empirically measuring concentration: Fundamental limits on intrinsic
  robustness.
\newblock {\em Advances in Neural Information Processing Systems}, 32, 2019.

\bibitem{moosavi2019robustness}
Seyed-Mohsen Moosavi-Dezfooli, Alhussein Fawzi, Jonathan Uesato, and Pascal
  Frossard.
\newblock Robustness via curvature regularization, and vice versa.
\newblock In {\em Proceedings of the IEEE/CVF Conference on Computer Vision and
  Pattern Recognition}, pages 9078--9086, 2019.

\bibitem{moosavi2018divide}
Seyed-Mohsen Moosavi-Dezfooli, Ashish Shrivastava, and Oncel Tuzel.
\newblock Divide, denoise, and defend against adversarial attacks.
\newblock {\em arXiv preprint arXiv:1802.06806}, 2018.

\bibitem{naseem2010linear}
Imran Naseem, Roberto Togneri, and Mohammed Bennamoun.
\newblock Linear regression for face recognition.
\newblock {\em IEEE transactions on pattern analysis and machine intelligence},
  32(11):2106--2112, 2010.

\bibitem{niu2020limitations}
Zhonghan Niu, Zhaoxi Chen, Linyi Li, Yubin Yang, Bo~Li, and Jinfeng Yi.
\newblock On the limitations of denoising strategies as adversarial defenses.
\newblock {\em arXiv preprint arXiv:2012.09384}, 2020.

\bibitem{pal2023understanding}
Ambar Pal and Jeremias Sulam.
\newblock Understanding noise-augmented training for randomized smoothing.
\newblock {\em Transactions on Machine Learning Research}, 2023.

\bibitem{papernot2016distillation}
Nicolas Papernot, Patrick McDaniel, Xi~Wu, Somesh Jha, and Ananthram Swami.
\newblock Distillation as a defense to adversarial perturbations against deep
  neural networks.
\newblock In {\em 2016 IEEE symposium on security and privacy (SP)}, pages
  582--597. IEEE, 2016.

\bibitem{somayeh}
Samuel Pfrommer, Brendon~G Anderson, and Somayeh Sojoudi.
\newblock Projected randomized smoothing for certified adversarial robustness.
\newblock 2022.

\bibitem{pydi2020adversarial}
Muni~Sreenivas Pydi and Varun Jog.
\newblock Adversarial risk via optimal transport and optimal couplings.
\newblock In {\em International Conference on Machine Learning}, pages
  7814--7823. PMLR, 2020.

\bibitem{smoothadv}
Hadi Salman, Jerry Li, Ilya Razenshteyn, Pengchuan Zhang, Huan Zhang, Sebastien
  Bubeck, and Greg Yang.
\newblock Provably robust deep learning via adversarially trained smoothed
  classifiers.
\newblock {\em Advances in Neural Information Processing Systems}, 32, 2019.

\bibitem{samangouei2018defense}
Pouya Samangouei, Maya Kabkab, and Rama Chellappa.
\newblock Defense-gan: Protecting classifiers against adversarial attacks using
  generative models.
\newblock In {\em International Conference on Learning Representations}, 2018.

\bibitem{madryB}
Ludwig Schmidt, Shibani Santurkar, Dimitris Tsipras, Kunal Talwar, and
  Aleksander Madry.
\newblock Adversarially robust generalization requires more data.
\newblock {\em Advances in neural information processing systems}, 31, 2018.

\bibitem{goldstein}
Ali Shafahi, W~Ronny Huang, Christoph Studer, Soheil Feizi, and Tom Goldstein.
\newblock Are adversarial examples inevitable?
\newblock {\em arXiv preprint arXiv:1809.02104}, 2018.

\bibitem{shafahi2019adversarial}
Ali Shafahi, Mahyar Najibi, Mohammad~Amin Ghiasi, Zheng Xu, John Dickerson,
  Christoph Studer, Larry~S Davis, Gavin Taylor, and Tom Goldstein.
\newblock Adversarial training for free!
\newblock {\em Advances in Neural Information Processing Systems}, 32, 2019.

\bibitem{soltanolkotabi2012geometric}
Mahdi Soltanolkotabi and Emmanuel~J Candes.
\newblock A geometric analysis of subspace clustering with outliers.
\newblock {\em The Annals of Statistics}, 40(4):2195--2238, 2012.

\bibitem{soltanolkotabi2014robust}
Mahdi Soltanolkotabi, Ehsan Elhamifar, and Emmanuel~J Candes.
\newblock Robust subspace clustering.
\newblock {\em The annals of Statistics}, 42(2):669--699, 2014.

\bibitem{spielman2004smoothed}
Daniel~A Spielman and Shang-Hua Teng.
\newblock Smoothed analysis of algorithms: Why the simplex algorithm usually
  takes polynomial time.
\newblock {\em Journal of the ACM (JACM)}, 51(3):385--463, 2004.

\bibitem{stutz2019disentangling}
David Stutz, Matthias Hein, and Bernt Schiele.
\newblock Disentangling adversarial robustness and generalization.
\newblock In {\em Proceedings of the IEEE/CVF Conference on Computer Vision and
  Pattern Recognition}, pages 6976--6987, 2019.

\bibitem{sulam2020adversarial}
Jeremias Sulam, Ramchandran Muthukumar, and Raman Arora.
\newblock Adversarial robustness of supervised sparse coding.
\newblock {\em Advances in neural information processing systems},
  33:2110--2121, 2020.

\bibitem{szegedy2013intriguing}
Christian Szegedy, Wojciech Zaremba, Ilya Sutskever, Joan Bruna, Dumitru Erhan,
  Ian Goodfellow, and Rob Fergus.
\newblock Intriguing properties of neural networks.
\newblock {\em arXiv preprint arXiv:1312.6199}, 2013.

\bibitem{madryA}
Dimitris Tsipras, Shibani Santurkar, Logan Engstrom, Alexander Turner, and
  Aleksander Madry.
\newblock Robustness may be at odds with accuracy.
\newblock {\em arXiv preprint arXiv:1805.12152}, 2018.

\bibitem{renebook}
Ren{\'e} Vidal, Yi~Ma, S~Shankar Sastry, Ren{\'e} Vidal, Yi~Ma, and S~Shankar
  Sastry.
\newblock {\em Generalized Principal Component Analysis}.
\newblock Springer, 2016.

\bibitem{voravcek2022provably}
V{\'a}clav Vor{\'a}cek and Matthias Hein.
\newblock Provably adversarially robust nearest prototype classifiers.
\newblock In {\em International Conference on Machine Learning}, pages
  22361--22383. PMLR, 2022.

\bibitem{wang2013noisy}
Yu-Xiang Wang and Huan Xu.
\newblock Noisy sparse subspace clustering.
\newblock In {\em International Conference on Machine Learning}, pages 89--97.
  PMLR, 2013.

\bibitem{wong2019fast}
Eric Wong, Leslie Rice, and J~Zico Kolter.
\newblock Fast is better than free: Revisiting adversarial training.
\newblock In {\em International Conference on Learning Representations}, 2019.

\bibitem{wright2010sparse}
John Wright, Yi~Ma, Julien Mairal, Guillermo Sapiro, Thomas~S Huang, and
  Shuicheng Yan.
\newblock Sparse representation for computer vision and pattern recognition.
\newblock {\em Proceedings of the IEEE}, 98(6):1031--1044, 2010.

\bibitem{wright2008robust}
John Wright, Allen~Y Yang, Arvind Ganesh, S~Shankar Sastry, and Yi~Ma.
\newblock Robust face recognition via sparse representation.
\newblock {\em IEEE transactions on pattern analysis and machine intelligence},
  31(2):210--227, 2008.

\bibitem{yang2020randomized}
Greg Yang, Tony Duan, J~Edward Hu, Hadi Salman, Ilya Razenshteyn, and Jerry Li.
\newblock Randomized smoothing of all shapes and sizes.
\newblock In {\em International Conference on Machine Learning}, pages
  10693--10705. PMLR, 2020.

\bibitem{you2015geometric}
Chong You and Ren{\'e} Vidal.
\newblock Geometric conditions for subspace-sparse recovery.
\newblock In {\em International Conference on Machine Learning}, pages
  1585--1593. PMLR, 2015.

\bibitem{yu2015semi}
Guoxian Yu, Guoji Zhang, Zili Zhang, Zhiwen Yu, and Lin Deng.
\newblock Semi-supervised classification based on subspace sparse
  representation.
\newblock {\em Knowledge and Information Systems}, 43(1):81--101, 2015.

\bibitem{zhang2022rethinking}
Bohang Zhang, Du~Jiang, Di~He, and Liwei Wang.
\newblock Rethinking lipschitz neural networks for certified l-infinity
  robustness.
\newblock {\em arXiv preprint arXiv:2210.01787}, 2022.

\bibitem{zhang2018unreasonable}
Richard Zhang, Phillip Isola, Alexei~A Efros, Eli Shechtman, and Oliver Wang.
\newblock The unreasonable effectiveness of deep features as a perceptual
  metric.
\newblock In {\em Proceedings of the IEEE conference on computer vision and
  pattern recognition}, pages 586--595, 2018.

\bibitem{zhang2021learning}
Shangzhi Zhang, Chong You, Ren{\'e} Vidal, and Chun-Guang Li.
\newblock Learning a self-expressive network for subspace clustering.
\newblock In {\em Proceedings of the IEEE/CVF Conference on Computer Vision and
  Pattern Recognition}, pages 12393--12403, 2021.

\bibitem{zhang2021manifold}
Shufei Zhang, Kaizhu Huang, Jianke Zhu, and Yang Liu.
\newblock Manifold adversarial training for supervised and semi-supervised
  learning.
\newblock {\em Neural Networks}, 140:282--293, 2021.

\bibitem{zhang2021understanding}
Xiao Zhang and David Evans.
\newblock Understanding intrinsic robustness using label uncertainty.
\newblock {\em arXiv preprint arXiv:2107.03250}, 2021.

\end{thebibliography}

%%%%%%%%%%%%%%%%%%%%%%%%%%%%%%%%%%%%%%%%%%%%%%%%%%%%%%%%%%%%
\newpage
\appendix
\section{Proof of \cref{th:concentration}} \label{proof:concentration}
We will make the technical assumption in \cref{sec:necessarycond} precise. Recall that all quantities are normalized so that $\cX$ is an $\ell_2$ ball of radius $1$, i.e., $\cX = B_{\ell_2}(0, 1)$. Recall that $q_k \colon \cX \to \R_{\geq 0}$ denotes the conditional distribution for class $k \in \cY$, having support $Q_k$. %We will impose a technical restriction %\ambarwriting{I wonder whether this should be fleshed out completely here, or moved to the appendix. A casual reader might take this $\epsilon \leq 0.1$ to be evidence that our results are only true for low $\epsilon$.} 
%to ensure that an adversarial perturbation on any data-point sampled from $p$ does not take the resultant attacked point outside the domain $\cX$. 
We will assume that there is a sufficient gap between the supports and the boundary of the domain, i.e., $Q_k \subseteq B_{\ell_2}(0, c)$, and that the $\ell_2$-adversarial attack has strength $\epsilon \leq c$, for some constant $c$, say $c = 0.1$. %The numerical values are arbitrary here, and taken to simplify notation. 
%\ambartechnical{Provide the more complicated proof without this assumption, and state how the concentration criterion changes to a much less interpretable version}

\concentration*
\begin{proof}
Note that %\ambartechnical{Note that we don't need $\cX$ to be the full dimensional ball, we can do with a low-dimensional $\cX$ provided the adversarial examples, and the volumes are also low-dimensional. If this synergy does not exist, Th 1 can be \emph{false}: have one plane as class 0, other far apart plane as class 1, and a robust classifier exists for full-dimensional $\ell_2$ bounded attacks even when nothing is concentrated in the lower-dimensional volume} 
the definition \eqref{robustness-defn} of $(\epsilon, \delta)$-robustness may be rewritten as
\begin{align}
R(f, \epsilon) 
&= \sum_k p_{X|Y}(X \text{admits an $\epsilon$-adversarial example} | Y = k) p_Y(y = k) \nonumber \\
&= \sum_k q_k(\{x \in \cX \colon \exists \bar x \in B(x, \epsilon) \text{ such that } f(\bar x) \neq k \}) p_Y(y = k), \label{implicit}
\end{align}
where $q(S)$ denotes the $q$-measure of the set $S$. 
We are given $f$ such that $R(f, \epsilon) \leq \delta$, and we want to find a set $S \subseteq \cX$ over which $p$ concentrates. 
\begin{gather}
R(f, \epsilon) \leq \delta \nonumber \\
\implies  \sum_k q_k(\{x \in \cX \colon \exists \bar x \in B(x, \epsilon) \cap \cX \text{ such that } f(\bar x) \neq k\}) p_Y(y = k) \leq \delta \label{cvxcomb}  
\end{gather}
Since $\sum_k p_Y(y = k) = 1$, the LHS in \eqref{cvxcomb} is a convex combination, and we have
\begin{gather}
\implies \exists \hat k \ q_{\bar k}(\{x \in \cX \colon \exists \bar x \in B(x, \epsilon) \cap \cX \text{ such that } f(\bar x) \neq \hat k\}) \leq \delta, \label{advmeasure} \\
\implies q_{\bar k}(U) \leq \delta, \text{ where } \nonumber \\
U = \{x \in \cX \colon \exists \bar x \in B(x, \epsilon) \cap \cX \text{ such that } f(\bar x) \neq \hat k\}. \nonumber
\end{gather}
%to be set being measured in \eqref{advmeasure}.

We will express $q_{\hat k}(U)$ in terms of the classification regions of $f$. 

Let $A \subseteq \cX$ denote the region where the classifier predicts $\hat k$, i.e. $A = \{x \in \cX \colon f(x) = \hat k\}$. Define $A^{-\epsilon}$ to be the set of all points in $A$ at a $\ell_2$ distance atleast $\epsilon$ from the boundary, i.e. $A^{-\epsilon}  = \{x \in A \colon B(x, \epsilon) \subseteq A\}$. Now consider any point $x \in \cX$. We have the following 4 mutually exclusive, and exhaustive cases:
\begin{enumerate}
\item $x$ lies outside $A$, i.e., $x \not \in A$. In this case, $x \in U$ trivially as $f(x) \neq k$. 
 
\item $x \in A$, and the entire $\epsilon$-ball around $x$ lies inside $A$, i.e., $B(x, \epsilon) \subseteq A$. In this case, every point in the $\epsilon$-ball around $x$ is classified into class $\hat k$, and $x$ has no $\epsilon$-adversarial example for class $\hat k$. Note that the set of all these points is $A^{-\epsilon}$ by definition. 

\item $x \in A$, and some portion of the $\epsilon$-ball around $x$ lies outside $A$, i.e., $B(x, \epsilon) \not \subseteq A$, and, there is a point $\bar x \in B(x, \epsilon) \cap \cX$ such that $f(\bar x) \neq k$. In this case, $\bar x$ is an adversarial example for $x$, and $\bar x \in U$. 

\item $x \in A$, and some portion of the $\epsilon$-ball around $x$ lies outside $A$, i.e., $B(x, \epsilon) \not \subseteq A$, \emph{but there is no point $\bar x \in B(x, \epsilon) \cap \cX$ such that $f(\bar x) \neq k$}. This can happen only when for all $\bar x \in B(x, \epsilon) \setminus A$ we have $\bar x \not \in \cX$. In other words, any $\epsilon$ perturbation that takes $x$ outside $A$, also takes it outside the domain $\cX$. This implies that $B(x, \epsilon) \not \subseteq \cX$, which in turn means that the distance of $x$ from the boundary of $\cX$ is atmost $\epsilon$. 
\end{enumerate}

We see that $U$ comprises of the points covered in cases (1) and (3). Recall that we assumed that any point having distance less than $0.1$ to the boundary of $\cX$  does not lie in the support of $q_{\bar k}$. But all points covered in case (4) lie less than $\epsilon$-close to the boundary, and $\epsilon \leq 0.1$. This implies that $q_{\bar k}$ assigns $0$ measure to the points covered in case (4). Together, cases (1), (3) and (4) comprise the set $(\cX \setminus A) \cup (A \setminus A^{-\epsilon})$. Thus, we have shown
\begin{gather}
q_{\hat k}(U) = q_{\hat k}(\cX \setminus A^{-\epsilon})  \nonumber \\
\implies q_{\hat k}(A^{-\epsilon}) \geq 1 - \delta. \label{lbdelta}
\end{gather}
Next, we will show that the set $A^{-\epsilon}$ is exponentially small, by appealing to the Brunn-Minkowski inequality, which states that for compact sets $E, F \subset \R^n$, 
\begin{equation*}
{\rm Vol}(E + F)^{\frac{1}{n}} \geq {\rm Vol}(E)^{\frac{1}{n}} + {\rm Vol}(F)^{\frac{1}{n}}, 
\end{equation*}
where ${\rm Vol}$ is the $n$-dimensional volume in $\R^n$, and $E + F$ denotes the minkowski sum of the sets $E$ and $F$. Applying the inequality with $E = A^{-\epsilon}$ and $F = B_{\ell_2}(0, \epsilon)$, we have
\begin{align}
{\rm Vol}(A^{-\epsilon} + B_{\ell_2}(0, \epsilon))^{\frac{1}{n}} &\geq {\rm Vol}(A^{-\epsilon})^{\frac{1}{n}} + {\rm Vol}(B_{\ell_2}(0, \epsilon))^{\frac{1}{n}} \nonumber \\
\implies {\rm Vol}(A)^{\frac{1}{n}} &\geq {\rm Vol}(A^{-\epsilon})^{\frac{1}{n}} + \epsilon {\rm Vol}(B_{\ell_2}(0, 1))^{\frac{1}{n}} \nonumber \\
\implies {\rm Vol}(A)^{\frac{1}{n}} &\geq {\rm Vol}(A^{-\epsilon})^{\frac{1}{n}} + \epsilon {\rm Vol}(A)^{\frac{1}{n}}  \nonumber \\
\implies \frac{{\rm Vol}(A^{-\epsilon})^{\frac{1}{n}}}{{\rm Vol}(A)^{\frac{1}{n}}} &\leq (1 - \epsilon) \nonumber \\
\implies {\rm Vol}(A^{-\epsilon}) &\leq {\rm Vol}(A)(1 - \epsilon)^n \leq {\rm Vol}(A) \exp (-\epsilon n)  \nonumber \\
\implies {\rm Vol}(A^{-\epsilon}) &\leq \bar C \exp (-n \epsilon), \label{ubeps}
%\implies {\rm Vol}(A \setminus A^{-\epsilon}) &\geq {\rm Vol}(A)(1 - (1 - \epsilon)^n) \geq {\rm Vol}(A)(1 - \exp (-\epsilon n))
\end{align}
where $\bar C = {\rm Vol}(A)$. %is the volume of the unit ball in $n$ dimensions, and $c_2 = n$. 
From \eqref{lbdelta} and \eqref{ubeps}, we see that $q_{\bar k}$ is $(\bar C, \epsilon, \delta)$-concentrated. Additionally, if the classes are balanced, we can use the fact that every term of \eqref{cvxcomb} can be atmost $K\delta$, and apply the above reasoning for each class to conclude that each $q_k$ is $(C_{\max}, \epsilon, K \delta)$-concentrated, for some suitable choice of $C_{\max}$, e.g., $C_{\max} = \max_k {\rm Vol} \{x \in \cX \colon f(x) = k\}$.
\end{proof}

\section{Proof of \cref{th:strongconc}} \label{proof:strongconc}
\strongconc*
\begin{proof}
For each $k \in \{1, 2, \ldots, K\}$, let $S_k$ be the support of the conditional density $q_k$. Recall that $S^{+\epsilon}$ is defined to be the $\epsilon$-expansion of the set $S$. Define $C_k$ to be the $\epsilon$-expanded version of the concentrated region $S_k$ but removing the $\epsilon$-expanded version of all other regions $S_{k'}$, as
\begin{equation*}
C_k = \left( S_k^{+\epsilon} \setminus \cup_{k' \neq k} S^{+\epsilon}_{k'} \right) \cap \cX.
\end{equation*}
We will use these regions to define our classifier $f \colon \cX \to \{1, 2, \ldots, K\}$ as
\begin{equation*}
f(x) = \begin{cases}
1, &\text{ if } x \in C_1 \\
2, &\text{ if } x \in C_2 \\
\vdots \\
K, &\text{ if } x \in C_K \\
1, &\text{ otherwise }
\end{cases}.
\end{equation*}
We will show that $R(f, \epsilon) \leq \delta + \gamma$, which can be recalled to be %\ambartechnical{All balls are open here, to prevent problematic cases of $\bar x$ exactly on the boundary of a classification region}
\begin{equation*}
R(f, \epsilon) = \sum_k q_k(\{x \in \cX \colon \exists \bar x \in B(x, \epsilon) \cap \cX \text{ such that } f(\bar x) \neq k\}) p_Y(y = k).
\end{equation*}
In the above expression, the $q_k$ mass is over the set of all points $x \in \cX$ that admit an $\epsilon$-adversarial example for the class $k$, as
\begin{equation}
U_k = \{x \in \cX \colon \exists \bar x \in B(x, \epsilon) \cap \cX \text{ such that } f(\bar x) \neq k\}. \label{unsafek}
\end{equation}
Define $C^{-\epsilon}_k$ to be the set of points in $C_k$ at a distance atleast $\epsilon$ from the boundary of $C_k$ as $C^{-\epsilon}_k = \{x \in C_k \colon B(x, \epsilon) \subseteq C_k\}$. For any point $x \in U_k$, we can find $\bar x \in B(x, \epsilon)$ from \eqref{unsafek} such that $\bar x \not \in C_k$, showing that $x \not \in C_k^{-\epsilon}$. Thus, $U_k$ is a subset of the complement of $C_k^{-\epsilon}$, i.e., $U_k \subseteq \cX \setminus C_k^{-\epsilon}$, and we have
\begin{equation*}
R(f, \epsilon) = \sum_k q_k(U_k) p_Y(y = k) \leq \sum_k (1 - q_k(C_k^{-\epsilon})) p_Y(y = k).
\end{equation*}
Now we will need to show a few properties of the $\epsilon$-contraction. Firstly, for a set $M = N \cap O$, we have $M^{-\epsilon} = N^{-\epsilon} \cap O^{-\epsilon}$, which can be seen as
\begin{align*}
M^{-\epsilon} &= \{x \colon x \in M, B(x, \epsilon) \subseteq M\} \\
&= \{x \colon x \in N, x \in O, B(x, \epsilon) \subseteq N, B(x, \epsilon) \subseteq O\} = N^{-\epsilon} \cap O^{-\epsilon}.
\end{align*}
Secondly, for a set $M = N^c$, where $c$ denotes complement, we have $M^{-\epsilon} = (N^{+\epsilon})^c$. This can be seen as
\begin{align*}
M^{-\epsilon} &= \{x \colon x \in M, B(x, \epsilon) \subseteq M\} = \{x \colon x \not \in N, B(x, \epsilon) \subseteq N^c\} \\
&= \{x \colon x \not \in N, \forall x' \in B(x, \epsilon) \  x' \not \in N\} \\
&= \{x \colon \forall x' \in B(x, \epsilon) \  x' \not \in N\} \\
\implies (M^{-\epsilon})^c &= \{x \colon \exists x' \in B(x, \epsilon) \ x' \in N\} \\
&= N^{+\epsilon}.
\end{align*}
Thirdly, for a set $M = N \setminus O$, we have $M^{-\epsilon} = (N \cap O^c)^{-\epsilon} = N^{-\epsilon} \cap (O^c)^{-\epsilon}$ by the first property, and then $N^{-\epsilon} \cap (O^c)^{-\epsilon} = N^{-\epsilon} \cap (O^{+\epsilon})^c$ by the second property. This implies 
\begin{align*}
(N \setminus O)^{-\epsilon} = N^{-\epsilon} \setminus O^{+\epsilon}.
\end{align*}
Fourthly, for a set $M = N \cup O$, we have $M^c = N^c \cap O^c$. Taking $\epsilon$-contractions, and applying the first and second properties, we get $M^{+\epsilon} = N^{+\epsilon} \cup O^{+\epsilon}$. 
Applying the above properties to $C_k^{-\epsilon}$, we have
\begin{align*}
C_k^{-\epsilon} 
&=  \left( S_k^{+\epsilon} \setminus \cup_{k' \neq k} S^{+\epsilon}_{k'} \right)^{-\epsilon} \cap \cX^{-\epsilon} \\
&=  \left( S_k \setminus \big(\cup_{k' \neq k} S^{+\epsilon}_{k'} \big)^{+\epsilon} \right) \cap \cX^{-\epsilon} \\
&\supseteq \left( S_k \setminus \cup_{k' \neq k} S^{+2\epsilon}_{k'} \right) \cap \cX^{-\epsilon}.
\end{align*}
%Let $M = \cup_{k' \neq k} S^{+\epsilon}_{k'}$. Then, 
%\begin{align*}
%C_k^{-\epsilon} &= \big(S_k^{+\epsilon} \cap M^c \big)^{-\epsilon} \cap \cX^{-\epsilon} = S_k \cap \big(M^c \big)^{-\epsilon} \cap \cX^{-\epsilon}.
%\end{align*}
%Now, $M^c = \cap_{k' \neq k} (S_k^{+\epsilon})^c$
Recall that $q_k(\cX^{-\epsilon}) = q(\cX) = 1$ by the support assumption. Hence, we have 
\begin{gather*}
q_k(C_k^{-\epsilon}) \geq q_k\left( S_k \setminus \cup_{k' \neq k} S^{+2\epsilon}_{k'} \right) = q_k(S_k) - q_k \left(\cup_{k' \neq k} S^{+2\epsilon}_{k'} \right) \geq (1 - \delta) - \gamma \\
\implies 1 - q_k(C_k^{-\epsilon}) \leq \delta + \gamma.
\end{gather*}
Finally, as $\sum_k p_Y(y = k) = 1$, we have $R(f, \epsilon) \leq \delta + \gamma$ by convexity. 
\end{proof}

\section{Proofs for \cref{escape}} \label{app:escape}
Let $\theta_0 \leq \pi/2$. We will show that the following classifier $f$ is robust in the setting of \cref{escape}:
\begin{gather*}
    f(x) = \begin{cases}
        1, &\text{ if } x^\top P \geq \theta_0 \\
        2, &\text{otherwise}
    \end{cases}.
\end{gather*}
Let $C_1 = \{x \in \bbS^{n-1} \colon x^\top P \geq \theta_0\}$ be the set of points classified into class $1$ by $f$. Similarly, let $C_2 = \bbS^{n-1} \setminus C_1$ be the set of points classified into class $2$. The robust risk of $f$ (measured w.r.t. perturbations in the geodesic distance $d$) can be expanded as
\begin{gather}
    R_d(f, \epsilon) = 0.5 q_1(d^{+\epsilon}(C_2)) + 0.5 q_2(d^{+\epsilon}(C_1)), \label{example1risk}
\end{gather}
where $d^{+\epsilon}(S)$ denotes the $\epsilon$-expansion of the set $S$ under the distance $d$. Let $\epsilon \leq \theta_0$ (otherwise, the first term is $1/2$). Recall that the class conditional $q_1$ is defined as
\begin{gather*}
q_1(x) = \begin{cases}
\frac{1}{c_\psi} \frac{\psi(d(x, P))}{\sin^{n-2} d(x, P)}, &\text{ if } d(x, P) \leq \theta_0 \\
0, &\text{ otherwise }
\end{cases},
\end{gather*}
where $c_\psi$ is a normalizing constant which ensures that $q_1$ integrates to $1$, \ie, $c_\psi = \int_{\bbS^{n-1}} q_1$. Then, $q_2$ was defined to be uniform over the complement of the support of $q_1$, i.e. $q_2 = {\rm Unif}(\{x \in \bbS^{n-1} \colon d(x, P) > \theta_0\})$.

We can expand the first term of \cref{example1risk} as
\begin{align*}
    q_1(d^{+\epsilon}(C_2)) 
    &= q_1(d^{+\epsilon}(C_2) \setminus C_2) + q_1(C_2) \\
    &= q_1(\{x \colon 0 < d(x, C_2) \leq \epsilon\}) + 0. 
\end{align*}
Now for any $x \not \in C_2$, we have $d(x, C_2) = \theta_0 - d(x, P)$. Hence, $\{x \colon 0 < d(x, C_2) \leq \epsilon\} = \{x \colon 0.1 - \epsilon \leq d(x, P) < \theta_0\}$. Let $\theta(x)$ denote the angle that $x$ makes with $P$. Then, the geodesic distance is $d(x, P) = \theta(x)$. The earlier set is the same as the set of all points satisfying $\theta_0 - \epsilon \leq \theta(x) < 1$. This is nothing but the $\epsilon$-base of the hyper-spherical cap having angle $\theta_0$.

We continue, 
\begin{align*}
    q_1(\{x \colon \theta_0 - \epsilon < d(x, P) \leq \theta_0) &= q_1(\{x \colon \theta_0 - \epsilon \leq \theta(x) < \theta_0\})
\end{align*}

With a change of variables, the above can be evaluated as 
\begin{align*}
    q_1(\{\theta_0 - \epsilon \leq \theta(x) < \theta_0\}) 
    &= \frac{1}{c_\psi} \int_{\theta_0 - \epsilon \leq \theta(x) < \theta_0} \frac{\psi(d(x, P))}{\sin(d(x, P))^{n-2}} dx\\
    &= \frac{1}{c_\psi} \int^{\theta_0}_{\theta_0 - \epsilon} \frac{\psi(\theta)}{(\sin \theta)^{n-2}} \mu_{n-2}(\{ x \in \bbS^{n-1} \colon \theta(x) = \theta \}) d \theta,
\end{align*}
where $\mu_{n-2}$ is the $n-2$-dimensional volume ($n-1$-dimensional surface area). Now, $\mu_{n-2}(\{ x \in \bbS^{n-1} \colon \theta(x) = \theta \})$ is nothing but the volume of an slice of the hypersphere $\bbS^{n-1}$. This slice is in itself a hypersphere in $\R^{n-1}$, having radius $\sin \theta$. Thus, its volume is the same as the volume of the $\bbS^{n-2}$, scaled by $(\sin \theta)^{n-2}$. We continue, 
\begin{align*}
    \int^{\theta_0}_{\theta_0 - \epsilon} \frac{\psi(\theta)}{(\sin \theta)^{n-2}} \mu_{n-2}(\{ x \in \bbS^{n-1} \colon \theta(x) = \theta \}) d \theta
    &= \int^{\theta_0}_{\theta_0 - \epsilon} \frac{\psi(\theta)}{(\sin \theta)^{n-2}} (\sin \theta)^{n-2} d \theta \\
    &= \int^{\theta_0}_{\theta_0 - \epsilon} \psi(\theta) d\theta
\end{align*}

For illustration, we can take $\psi(\theta) = 1$, giving $q_1(d^{+\epsilon}(C_1)) = \epsilon / C$ from the above integral, with $c_\psi = \int_{0}^{\theta_0} \psi(\theta) d\theta = \theta_0 = \theta_0$. 

We can now follow the same process as above, and expand the \cref{example1risk} as
\begin{gather*}
    q_2(d^{+\epsilon}(C_1)) = \mu_{n-1}(\{x \colon \theta_0 \leq \theta(x) \leq \theta_0 + \epsilon), 
\end{gather*}
where $\mu_{n-1}$ is again the $n-1$-dimensional volume. We use the following formula \cite{li2010concise} for the surface area of a hyperspherical cap (valid when $\alpha \
\leq \pi/2$): 
\begin{gather*}
    \mu_{n-1}(\{x \colon 0 \leq \theta(x) \leq \alpha\}) = m(\alpha) \overset{\rm def}{=} 0.5 \mu_{n-1}(\bbS^{n-1}) I_{\sin^2 \alpha}\left(\frac{n-1}{2}, \frac{1}{2}\right),
\end{gather*}
where $I(\cdot, \cdot)$ is the incomplete regularized beta function. Letting $d_{\theta_0} = m(\pi/2) + m(\pi/2) - m(\theta_0)$, we continue,
\begin{gather*}
	q_2(d^{+\epsilon}(C_1)) = \begin{cases} 
	\frac{1}{d_{\theta_0}} \big(m(\theta_0 + \epsilon) - m(\theta_0)\big), &\text{ if } \epsilon + \theta_0 \leq \pi/2 \\
	\frac{1}{d_{\theta_0}} \big(m(\pi/2) + m(\pi/2) - m(\theta_0 + \epsilon - \pi/2)\big),  &\text{ if }   \pi/2 \leq \epsilon + \theta_0 \leq \pi \\
	\frac{1}{d_{\theta_0}}, &\text{ else }
	\end{cases}.
    % q_2(d^{+\epsilon}(C_1)) = 0.5 \left(I_{\sin^2 (0.1 + \epsilon)}\left(\frac{n-1}{2}, \frac{1}{2}\right) -  I_{\sin^2 0.1}\left(\frac{n-1}{2}, \frac{1}{2}\right)\right)
\end{gather*}

%Hence, we have the robust risk of $f$ as
%\begin{gather}
%    R_d(f, \epsilon) = 0.5 \min \left(\frac{\epsilon}{\theta_0}, 1\right) + 0.5 \left(I_{\sin^2 (0.1 + \epsilon)}\left(\frac{n-1}{2}, \frac{1}{2}\right) -  I_{\sin^2 0.1}\left(\frac{n-1}{2}, \frac{1}{2}\right)\right). \label{sphereconcrisk}
%\end{gather}
%the second term in \eqref{sphereconcrisk} is much smaller than the first, and the above expression effectively behaves as $0.5 \epsilon / C$. 
%For $\theta_0$ smaller than  values of $\epsilon < \pi/2$, the risk from class $1$ 
The risk from class $1$ dominates till $\epsilon$ reaches around $\pi/2$, then the risk from class $2$ shoots up. We plot the resultant risk in \cref{fig:concplot1} (left). 
\begin{figure}[h!]
\centering
\includegraphics[width=0.325\textwidth]{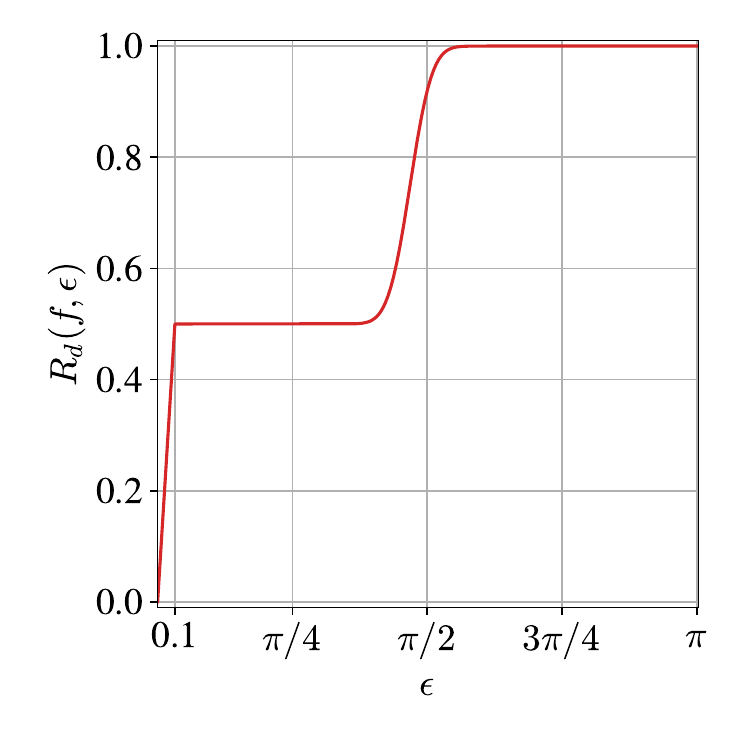}
\includegraphics[width=0.325\textwidth]{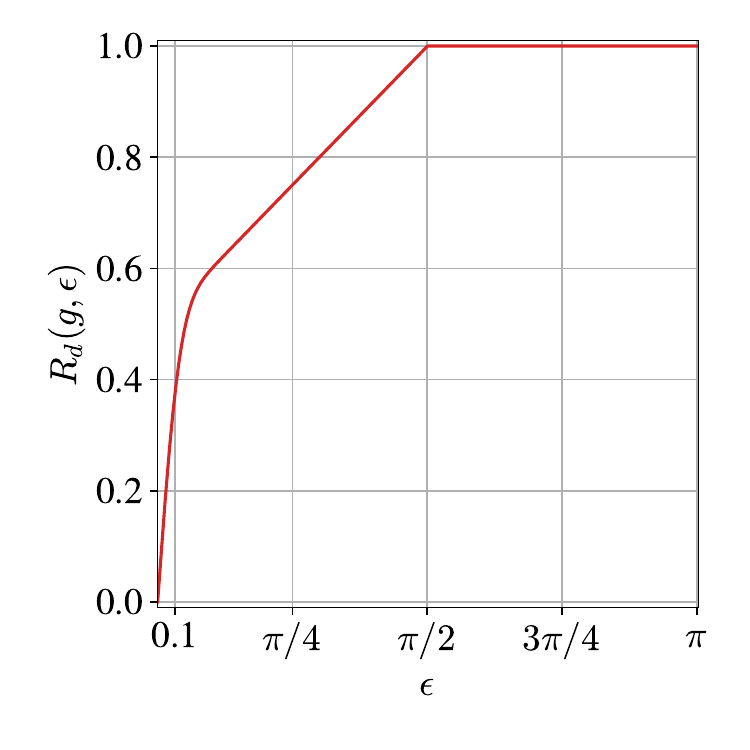}
\includegraphics[width=0.325\textwidth]{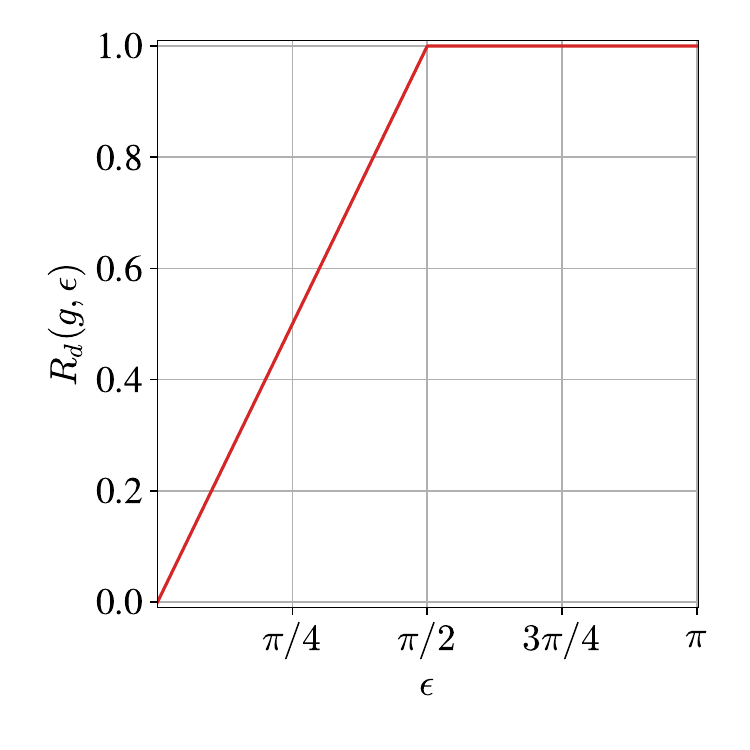}
\caption{Robust risk for classifiers for concentrated distributions on the sphere $\bbS^{n-1}$, for dimension $n = 100$. (Left): A plot of $R_d(f, \epsilon)$ w.r.t $p_1$. As $\epsilon$ grows from $0$ to $0.1$, class $1$ is the major contributor to the risk. At $\epsilon = 0.1$, all points in class $1$ admit an adversarial  perturbation, taking them to class $2$. As $\epsilon$ grows further, the risk for class $2$ grows slowly, till $\epsilon$ reaches close to $\pi/2$, after which it blows up abruptly due to the high-dimensional fact that most of the mass of the uniform distribution lies near the equator of the sphere. (Middle): A plot of $R_d(g, \epsilon)$ w.r.t. $p_2$. As we make the support of the concentrated class larger, the distribution admits a classifier $g$ with slightly improved robustness for smaller $\epsilon$ values ($\leq 0.1$). (Right): A plot of $R_d(g, \epsilon)$ w.r.t. $p_3$: Finally, as both class conditionals are made concentrated, the classifier $g$ becomes quite robust.} 
\label{fig:concplot1}
\end{figure}

Having presented the basic construction, we now comment on how \cref{escape} can be generalized. Firstly, we can vary the support of $q_1$, to have it cover a different fraction of the sphere. For instance, we can set $\theta_0 = \pi/2$ to get a modified $q_1$ as
\begin{gather*}
q'_1(x) = \begin{cases}
\frac{1}{C} \frac{1}{\sin^{n-2} d(x, P)}, &\text{ if } d(x, P) \leq \pi/2 \\
0, &\text{ otherwise }
\end{cases},
\end{gather*}
and let $q'_2$ be uniform over the complement of the support of $q'_1$. Along with balanced classes (\ie, $P(Y=1) = P(Y=2) = 1/2$), this gives the data distribution $p_2$. The robust classifier would now be given by the half-space $\{x \colon d(x, P) \leq \pi/2\}$. The risk over $p_2$ can be computed as earlier, and is plotted in \cref{fig:concplot1} (middle). 

We can take this example further, and make both $q_1$ and $q_2$ concentrated. This can be done, for instance, by setting $q''_1 = q_1$, and setting $q''_2$ as follows:
\begin{gather*}
q''_2(x) = \begin{cases}
\frac{1}{C} \frac{1}{\sin^{n-2} d(x, -P)}, &\text{ if } d(x, -P) \leq \pi/2 \\
0, &\text{ otherwise }
\end{cases},
\end{gather*}
where $-P$ is the antipodal point of $P$. %The resultant data distribution $p_3$ is illustrated in \cref{fig:p3plot}. 
As both class conditionals are now concentrated, the halfspace separating their supports becomes quite robust. This can be seen in the \cref{fig:concplot1} (right).

Lastly, $\psi$ can be taken to be a rapidly decaying function for even greater concentration and lesser robust risk, e.g., $\psi(\theta) = \exp (-\theta)$.

\section{Proofs for \cref{ex:subspaceconc}} \label{app:subspaceconc}
\subspaceconc*

It would be helpful to have \cref{fig:escape2_copy} in mind for what follows. 
\begin{figure}[h!]
\centering
\includegraphics[width=0.6\textwidth]{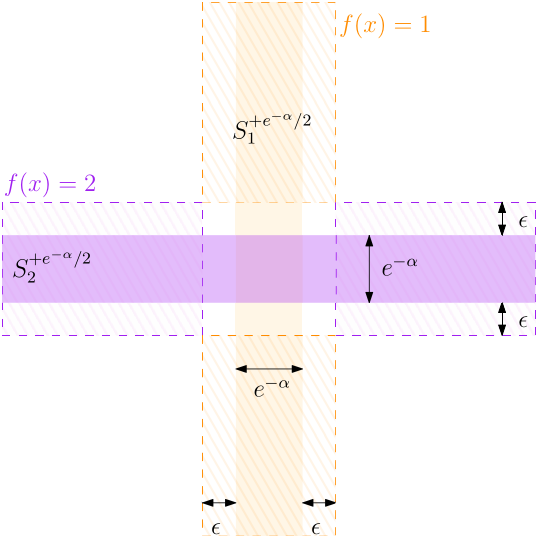}
\caption{A plot of $q_1$ (orange), $q_2$ (violet) and the decision boundaries of $f$ (dashed).} 
\label{fig:escape2_copy}
\end{figure}

We will demonstrate that the classifier $f$ illustrated in \cref{fig:escape2_copy} has low robust risk. We define $f$ for any $x \in B_\infty(0, 1)$, as follows (let $\gamma = e^{-\alpha}$):
\begin{gather*}
    f(x) = \begin{cases}
        1, &\text{ if } |x_1| \leq \gamma/2 + \epsilon, |x_2| \geq \gamma/2 + \epsilon \\
        2, &\text{ if } |x_2| \leq \gamma/2 + \epsilon, |x_1| \geq \gamma/2 + \epsilon \\
        1, &\text{otherwise}
    \end{cases}.
\end{gather*}
The reason for splitting the cases for predicting $1$ into two subcases, is that we will only need to analyse the first subcase and the second will not contribute to the robust risk. 

Defining $U_1$ to be the set of all points $x$ which admit an adversarial example for the class $1$, i.e. $U_1 = \{x \colon \exists \bar x \text{ such that } \|\bar x - x\|_2 \leq \epsilon, f(\bar x) = 2\}$. It is clear that 
\begin{gather*}
    U_1 \subseteq \{x \colon |x_1| \geq \gamma/2 \text{ or } |x_2| \leq \gamma/2 + 2\epsilon \}.
\end{gather*}
$U_2$ is defined analogously, and 
\begin{gather*}
    U_2 \subseteq \{x \colon |x_2| \geq \gamma/2 \text{ or } |x_1| \leq \gamma/2 + 2\epsilon \}.
\end{gather*}
Now, $R(f, \epsilon) = 0.5 q_1(U_1) + 0.5 q_2 (U_2)$. We look at the first term, and simplify an upper bound:
\begin{align}
    q_1(U_1)
    &\leq q_1(\{x \colon |x_1| \geq \gamma/2 \text{ or } |x_2| \leq \gamma/2 + 2\epsilon) \nonumber \\
    &\leq q_1(\{x \colon |x_1| \geq \gamma/2\}) + q_1(\{x \colon |x_2| \leq \gamma/2+ 2\epsilon) \nonumber \\
    &= 0 + q_1(\{ x \colon |x_2| \leq \gamma/2+ 2\epsilon \}). \label{q1upperbd}
\end{align}
Now, recall that $q_1 = {\rm Unif}(\{x \colon |x_1| \leq \gamma/2\})$, hence the expression in \eqref{q1upperbd} evaluates to
\begin{align*}
    q_1(\{ x \colon |x_2| \leq \gamma/2+ 2\epsilon \}) 
    &= \frac{1}{{\rm Vol}(\{x \colon |x_1| \leq \gamma/2\})} {\rm Vol}(\{ x \colon |x_2| \leq \gamma/2+ 2\epsilon, |x_1| \leq \gamma/2\}) \\
    &= \frac{1}{2^{n-1} \gamma} 2^{n-2} \cdot (\gamma + 4\epsilon) \cdot \gamma \\
    &= 0.5 \gamma + 2\epsilon
\end{align*}
The situation for $q_2$ is symmetric, and hence $q_2(U_2) = 0.5 (\gamma + 4\epsilon)$. Adding them together, we see that the robust risk is upper bounded by
\begin{gather*}
    R(f, \epsilon) \leq 0.5 (e^{-\alpha} + 4\epsilon)
\end{gather*}
For the concentration parameters, note that the support of $q_1$, \ie, $S_1 = \{ x \colon |x_1| \leq \gamma/2, \|x\|_\infty \leq 1\}$ has volume $\gamma 2^{n-1} = 2^{n-1} \exp (-\alpha) = 0.5\ 2^n \exp (-n (\alpha / n)) \leq 0.5 \exp (-n (\alpha / n - 1))$. Hence, $q_1$ is $(0.5, \alpha/n - 1, 0)$ concentrated over $S_1$. Similarly, $q_2$ is $(0.5, \alpha/n - 1, 0)$-concentrated over $S_2 = \{ x \colon |x_2| \leq \gamma/2, \|x\|_\infty \leq 1\}$. Finally, 
\begin{align*}
q_1(S_2^{+2\epsilon}) 
&= q_1(\{\|x\|_\infty \leq 1, \|x_1\| \leq \gamma/2, \|x_2\| \leq \gamma/2 + 2 \epsilon\}) \\
&= 2^{n-2} \gamma (\gamma + 4 \epsilon) \gamma^{-1} 2^{1-n} \\
&= 0.5 \gamma + 2\epsilon
\end{align*}

\section{Proof of \cref{lem:noisyl2}} \label{app:noisyl2}
\noisyl*
\begin{proof}
Let $\lambda \x' \in C(\x)$ as defined in \eqref{eq:conenoisyl2}. There exist $\f \in F(\x), \v \in V(\x)$ such that $\lambda \x' = \f + \v$. We will show that the projection of $\lambda \x'$ onto $F$ is given by $\f$. Recall that
\begin{equation}
A_\lambda(\x') = \{\bt_i \colon \langle \bt_i, \d^*_{\lambda}(\x') \rangle = 1\}, \text{ where } \d^*_\lambda(\x') = {\rm Proj}_{K^\circ}(\lambda \x'). 
\end{equation}

Choose any $\z \in K^\circ$. Recall that ${\rm Proj}_{K^\circ}(\lambda \x') = \min_{\z \in K^\circ} \|\z - \lambda \x'\|_2$. Consider the objective, 
\begin{equation}
\|\z - \lambda \x'\|_2^2 = \|(\z - \f) - \v\|_2^2 = \|\z - \f\|_2^2 + \|\v\|_2^2 - 2 \langle \z - \f, \v \rangle. \label{eq:distance}
\end{equation}

We will show that $\langle \z - \f, \v \rangle$ is negative. Consider any $\bt_i \in A_\lambda(\x)$, and observe that $\langle \z, \bt_i \rangle \leq 1$, 
%\rene{Why do we need $s_i$. AP: Fixed}
as $\z \in K^\circ$. However, from the definition of $F(\x)$ recall that $\langle \f, \bt_i \rangle = 1$. This implies that $\langle \z - \f, \bt_i \rangle \leq 0$. Using the fact that $\v \in V(\x)$, we expand the inner product as 
\begin{equation}
\langle \z - \f, \v \rangle = \sum_{\bt_i \in A_\lambda(\x)} \alpha_i \langle \z - \f, \bt_i \rangle \leq 0. \label{eq:obtuseangle}
\end{equation}
Using \eqref{eq:obtuseangle} in \eqref{eq:distance} to obtain
\begin{equation}
\|\z - \lambda \x'\|_2^2 \geq \|\z - \f\|_2^2 + \|\v\|_2^2 \geq 0. 
\end{equation}
The minimizer above is obtained at $\z = \f \in C(\x)$, and hence we have shown that for all $\lambda \x' \in C(\x)$, ${\rm Proj}_{K^\circ}(\lambda \x') = \f$. Finally, $\d^*_\lambda(\x') = {\rm Proj}_{K^\circ}(\lambda \x') = \f \in F(\x)$. The theorem statement then follows from the definition of $F(\x)$. 
\end{proof}

\section{Further Experimental Details and Qualitative Examples} \label{app:expt}
\subsection{Comparison along Projection on $C_\lambda$}
In this section, we will provide more details on attacking a classifier with perturbations that lie on our certified set $C_\lambda$. %To recall, 
\paragraph{Details of the Projection Step \rom{2} in \eqref{pgdmod}}
Recall from \cref{lem:noisyl2} that $C_\lambda(x)$ is defined as the Minkowski sum of the face $F_\lambda(x)$ \eqref{face} and the cone $V_\lambda(x)$ \eqref{cone}. Given an iterate $x^t$, we can compute its projection onto $C_\lambda(x^0)$ by solving the following optimization problem
\begin{align}
 {\rm Proj}_{C_\lambda(x^0)} (x^t) 
 &= \argmin_x \|x^t - x\|_2 \text{ s.t. } x \in C_\lambda(x^0) \nonumber
\end{align}
Now every $x \in C_\lambda(x^0)$ can be written as $x = d + v$ where $d \in F(x^0), v \in V(x^0)$. $d$ is such that $t^\top d = 1$ for all $t_i \in A_\lambda(x^0)$, and $t^\top d \leq 1$ otherwise. Then, $v$ is such that $v = \sum_{t_i \in A_\lambda(x^0)} \alpha_i t_i$ for some $\alpha_i \geq 0$. Recall that $T=[t_1, t_2, \ldots, t_{2M}]$ denotes the matrix containing the training data-points as well as their negations. Let the matrix $A_1 \in \R^{d \times |A_\lambda(x^t)|}$ contain all the columns of $T$ in $A_\lambda(x^t)$ and let $A_2 \in \R^{d \times (2M - |A_\lambda(x^t)|)}$ contain all the remaining columns. Then the objective of the optimization problem above can be written as
\begin{align}
\min_{\alpha, d} \left \|x^t - A_1 \alpha - d \right \|^2_2  \text{ s.t. } A_1^\top d = \1, A_2^\top d \leq \1, \alpha \geq \0 \label{qp}
\end{align}
\eqref{qp} is a linearly constrained quadratic program, and can be solved by standard optimization tools. In particular, we use the \texttt{qp} solver from the python \texttt{cvxopt} library. 

\paragraph{Visualization of the Certified Set}
In \cref{fig:support}, we visualize the set $A_\lambda(x)$, \ie, the set of active constraints at the optimal solution of the dual problem, for several $x$ in the MNIST test set. It can be seen that $A_\lambda(x)$ contains images of the same digit as $x$ in most cases. This is expected, as taking the majority label among $A_\lambda(x)$ gives an accuracy of around $97\%$ (solid blue curve in \cref{fig:comparison1}). Note that all these images $t \in A_\lambda(x)$ are also contained in the certified set, $C_\lambda(x)$, which can be seen as $t = t + \0$,  with $t \in F_\lambda(x)$, and $\0 \in V_\lambda(x)$. 
\begin{figure}[h]
\centering
\includegraphics[width=\textwidth]{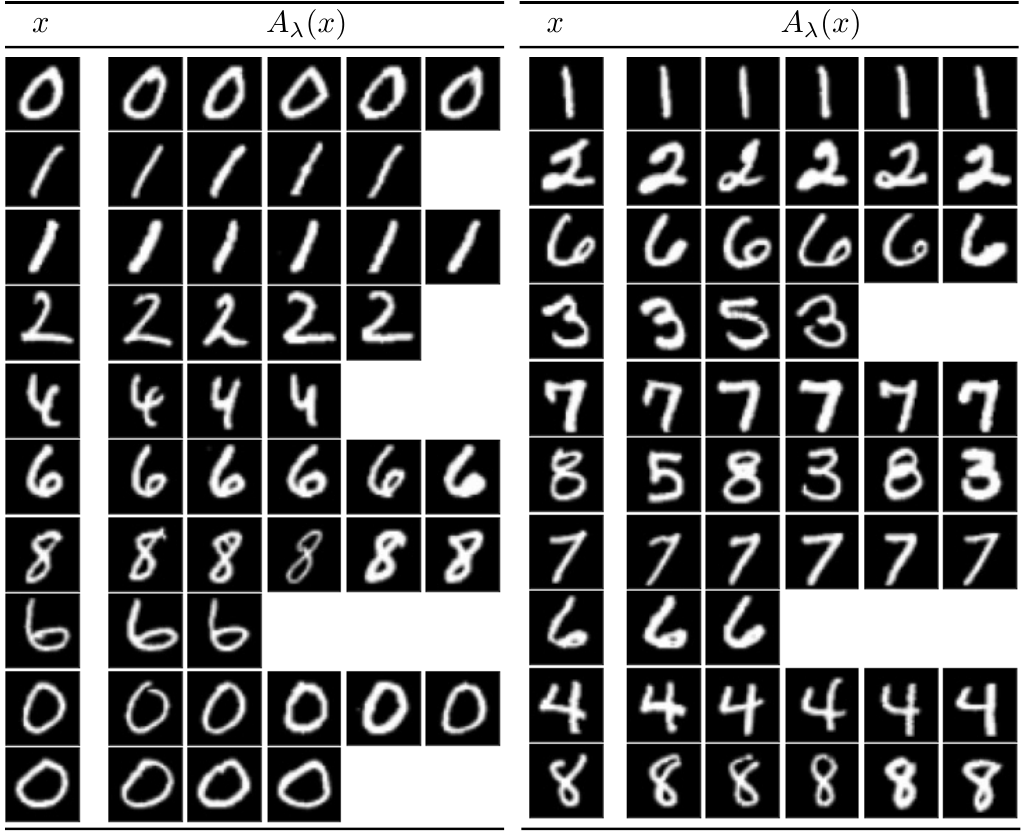}
\caption{A visualization of the set of active constraints $A_\lambda(x)$ for several $x$. Atmost $5$ members of $A_\lambda(x)$ are shown for every $x$.} 
\label{fig:support}
\end{figure}

\paragraph{Visualization of Attacks restricted to the Certified Set}
In \cref{fig:projattackvis}, we visualize the attacks restricted to our certified set $C_\lambda$ computed by \eqref{pgdmod} for the classifier $g^{\rm RS}_{0.02}$. We observe that we can identify the correct digit from most of the attacked images, but the randomized smoothing classifier is incorrect at higher $\epsilon$. A small number of these attacked images are close to the actual class decision boundary, and the class is ambigous. This is expected, both due to the inherent class ambiguity present in some MNIST images, as well as the large $\epsilon$ we are attacking with. For all these images, the prediction of our classifier $g_\lambda$ is certified to be accurate. For comparison, the RS certificate is unable to certify anything beyond $\epsilon \geq 0.06$. 
\begin{figure}[h]
\centering
\includegraphics[width=\textwidth]{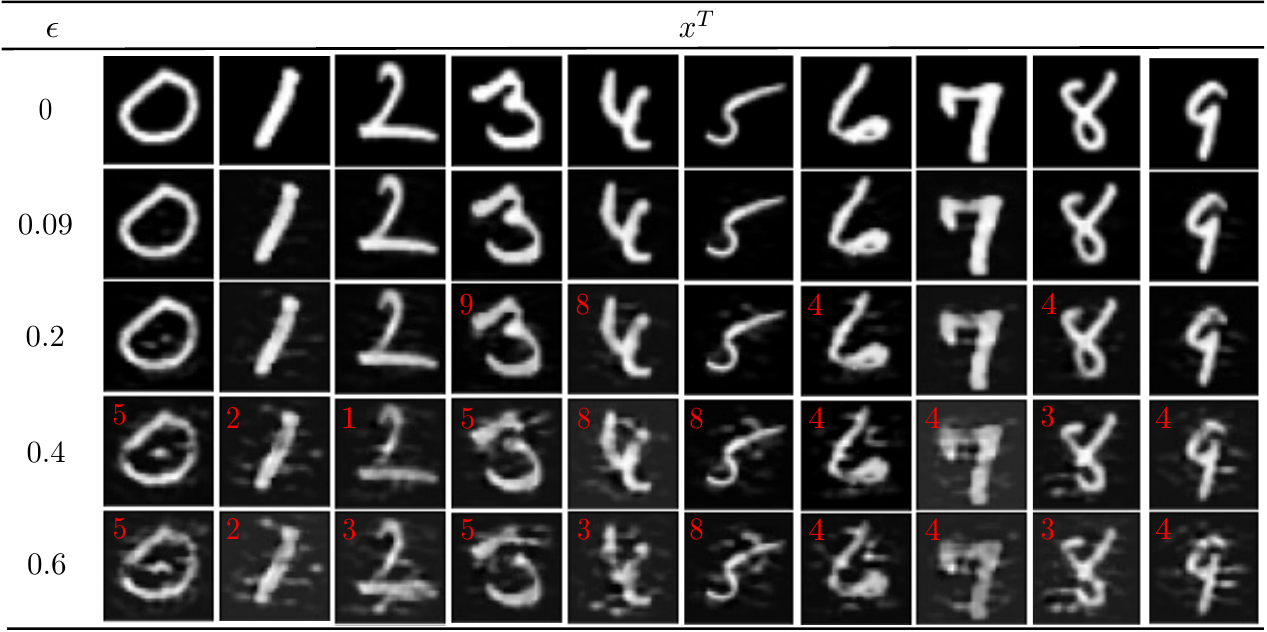}
\caption{A visualization of attacks on the RS classifier $g^{\rm RS}_{0.02}$ restricted to our certified set $C_\lambda$ obtained by \eqref{pgdmod}. Different rows plot different attack strengths $\epsilon$. Whenever an image is misclassified, the red annotation on the top left shows the class predicted by $g^{\rm RS}_{0.02}$.} 
\label{fig:projattackvis}
\end{figure}

\subsection{Comparison along $\ell_2$ balls}
\paragraph{Exact Certification for $g_\lambda$}
In this section, we first state and prove \cref{lem:noisyl2} that allows us to exactly compute the $\ell_2$ certified radius for our classifier $g_\lambda$ at any point $x$.
\begin{lemma}
For all $\x' \in C(\x)$ as defined in \cref{lem:noisyl2}, we have $g_\lambda(\x') = g_\lambda(\x)$. Additionally, for all $\| \v \|_2 \leq r_0(\x)$ we have $g_\lambda(\x + \v) = g_\lambda(\x)$, where 
\begin{align}
\begin{split}
r_0(\x) = 
&\quad\;\; \min_{\u} \quad  -\langle \x, \u \rangle \\
&\quad \text{sub. to} \quad
%\begin{aligned}& 
\|\u\|_2 = 1, \quad
%\\ 
%&
\langle \u, \bt_i \rangle \leq 0 \ \forall \bt_i \in A_\lambda(\x), \quad
%\\
%& 
\langle \u, \v \rangle \leq 1 \ \forall \v \in {\rm ext}(F(\x)),
%\end{aligned}
\end{split} \label{eq:optexact}
\end{align}
where ${\rm ext}(F(\x))$ is the set of extreme points of the polyhedron $F(\x)$. 
\label{lem:gcertnoisy}
\end{lemma}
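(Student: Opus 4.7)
The first claim is an immediate consequence of \cref{lem:noisyl2}: whenever $\lambda \x' \in C(\x)$, the active set $A_\lambda(\x')$ coincides with $A_\lambda(\x)$, and since $g_\lambda$ in \eqref{eq:g} is a deterministic function of $\{y_i : \bt_i \in A_\lambda(\cdot)\}$, the aggregator returns the same label at $\x'$ as at $\x$.

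For the second claim, my plan is to show $B_{\ell_2}(\x, r_0(\x)) \subseteq \{\x' : \lambda \x' \in C(\x)\}$; the first claim then yields $g_\lambda(\x + \v) = g_\lambda(\x)$ for all $\|\v\|_2 \leq r_0(\x)$. To this end, I would describe $C(\x) = F(\x) + V(\x)$ as the intersection of all its supporting half-spaces, using the decomposition $h_{C(\x)}(\u) = h_{F(\x)}(\u) + h_{V(\x)}(\u)$ of the support function of a Minkowski sum. Since $V(\x) = \mathrm{cone}\{\bt_i : \bt_i \in A_\lambda(\x)\}$, $h_{V(\x)}(\u)$ is $0$ on the polar cone $\{\u : \langle \u, \bt_i\rangle \leq 0\;\forall \bt_i \in A_\lambda(\x)\}$ and $+\infty$ otherwise, yielding the first family of constraints in \eqref{eq:optexact}. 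Boundedness of $F(\x)$ (a face of the bounded polytope $K^\circ$) makes $h_{F(\x)}(\u) = \max_{\v \in \mathrm{ext}(F(\x))}\langle \u, \v\rangle$; the constraint $h_{F(\x)}(\u) \leq 1$ is then equivalent to $\langle \u, \v\rangle \leq 1$ for every $\v \in \mathrm{ext}(F(\x))$, matching the second family of constraints.

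For any unit $\u$ meeting both families, $H_\u = \{\d : \langle \u, \d\rangle \leq 1\}$ contains $C(\x)$, and the $\ell_2$-distance from $\lambda \x$ to $\partial H_\u$ is $1 - \langle \u, \lambda \x\rangle$, so the ball $B_{\ell_2}(\x, (1-\langle\u,\lambda\x\rangle)/\lambda)$ sits inside $H_\u / \lambda$. Minimizing over admissible $\u$ and absorbing the $\lambda$ factor into the normalization of the program recovers the $r_0(\x)$ stated in \eqref{eq:optexact}. The step I expect to be most delicate is carefully tracking this $\lambda$-dependent normalization and confirming that the supporting half-spaces parametrized with right-hand side equal to $1$ suffice to reconstruct $\partial\{\x' : \lambda \x' \in C(\x)\}$ in every direction that actively bounds the distance from $\x$, so that the minimum genuinely yields the largest admissible radius rather than a strict lower bound.
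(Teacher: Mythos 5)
Your route is essentially the paper's. The first claim is handled identically in both: $g_\lambda$ in \eqref{eq:g} depends on $\x$ only through $A_\lambda(\x)$, which \cref{lem:noisyl2} shows is constant on $C(\x)$. For the second claim, the paper likewise passes to a half-space description of $C(\x)$ with normals drawn from the polar $C^\circ(\x)$ and minimizes a point-to-hyperplane distance over unit normals; it obtains the constraint set $\{\u : \langle\u,\bt_i\rangle\le 0\ \forall \bt_i\in A_\lambda(\x),\ \langle\u,\v\rangle\le 1\ \forall\v\in{\rm ext}(F(\x))\}$ by citing a polarity theorem for sets of the form ${\rm conv}({\rm ext}(F))+{\rm cone}(A_\lambda)$, whereas you re-derive the same set from $h_{F+V}=h_F+h_V$; these are equivalent. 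The one substantive divergence is exactly the point you flag: your (correct) per-half-space distance is $(1-\langle\u,\lambda\x\rangle)/\lambda = 1/\lambda-\langle\u,\x\rangle$, and the $1/\lambda$ is an additive offset that no rescaling of the program absorbs, so you do not literally land on the stated objective $-\langle\x,\u\rangle$. The paper arrives at $-\langle\x,\u\rangle$ only by measuring the distance from $\x$ (rather than $\lambda\x$) to the homogeneous hyperplanes $\{\d:\langle\u,\d\rangle=0\}$ and never dividing by $\lambda$; your accounting is the more careful of the two, and since your radius exceeds the stated $r_0(\x)$ by $1/\lambda\ge 0$, the lemma as written follows a fortiori from your argument. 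Both proofs share the residual subtlety you name, though the risk runs in the opposite direction from the one you suggest: restricting the polar to $\|\u\|_2=1$ a priori yields only an outer polyhedral approximation of $C(\x)$, so one must verify that these unit-norm constraints already cut out $C(\x)$ in every facet direction -- otherwise the minimum could certify too large a ball rather than too small a one. The paper does not address this either.
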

\paragraph{Discussion} Before presenting the proof, let us ponder over the result. We see that \eqref{eq:optexact} involves solving an optimization problem having as many constraints as the number of extreme points of the polyhedron $F(x)$. In general, this can be very large in high dimensions, and hence computationally inefficient to compute. As a result, while \cref{lem:gcertnoisy} provides an exact certificate in theory, it is hard to use in practice without further approximations.

\begin{proof}
Recall that given a set $A_\lambda(\x)$, the polyhedron $C(\x)$ is defined in \eqref{eq:conenoisyl2} as 
\begin{equation}
C(\x) = F(\x) + V(\x),
\end{equation}
where $F(\x)$ is a polyhedron, and $V(\x)$ is a cone given as the conic hull of $A_\lambda(\x)$, \ie, $V(\x) = \text{cone}(A_\lambda(\x))$. We want to find the size $r(\x)$ of the largest $\ell_2$ ball centered at $\x$, \ie, $B(\x, r(\x))$, that can be inscribed within $C(\x)$. 
From convex geometry \cite{basu2019introduction}, we know that any polyhedron can be described as an intersection over halfspaces obtained from its polar, $C^\circ$ as
\begin{equation}
C(\x) = \cap_{\u \in C^\circ(\x)} H^-(\u, 0), \label{eq:halfspace}
\end{equation}
where $H^-(\u, 0) = \{\x \colon \langle \u, \x \rangle \leq 0\}$. The distance of $\x$ from the boundary of $C(\x)$ is the same as the smallest distance of $\x$ from any halfspace in \eqref{eq:halfspace}. However, this distance is simply 
\begin{equation}
\text{dist}(\x, H^-(\u, 0)) = -\left \langle \x, \frac{\u}{\|\u\|_2} \right \rangle. \label{eq:dist}
\end{equation}

In other words, we can expressing $C(\x)$ as the intersection over halfspaces whose normals lie in $C^\circ(\x)$, \ie, 
\begin{equation}
r(\x) = \left( \min_{\u} -\langle \x, \u \rangle, \text{ sub. to } \|\u\|_2 = 1, \u \in C^\circ(\x) \right).\label{eq:link1}% 
\end{equation}
All that is left is to obtain a description of $C^\circ(\x)$. This can be done by first expressing $C(\x)$ in a standard form, as 
\begin{equation}
C(\x) = \text{conv}(\text{ext}(F(\x))) + \text{cone}(A_\lambda(\x)), 
\end{equation}
where $\text{ext}(F(\x))$ denotes the extreme points of the polyhedron $F(\x)$, $\text{conv}(\cdot)$ denotes the convex hull and $\text{cone}(\cdot)$ denotes the conic hull. Now we can apply a theorem in convex geometry to obtain the polar (\cite[Th. 2.79]{basu2019introduction}) as
\begin{equation}
C^\circ(\x) = \left \{\u \colon \begin{array}{l} \langle \u, \bt_i \rangle \leq 0 \ \forall \bt_i \in A_\lambda(\x)\\ \langle \u, \v \rangle \leq 1 \  \forall \v \in \text{ext}(F(\x)) \end{array} \right \} \label{eq:link2}
\end{equation}
Combining \eqref{eq:link1} and \eqref{eq:link2}, we obtain the lemma statement. Finally, we note that as \cref{lem:noisyl2} shows that $A_\lambda(\x') = A_\lambda(\x)$ for all $\lambda \x' \in C(\x)$, and the classifier $g_\lambda(\x)$ is purely a function of $A_\lambda(\x)$, we have that $g_\lambda(\x') = g_\lambda(\x)$ for all $\lambda \x' \in C(\x)$. 
\end{proof}

\paragraph{Black Box Attacks on $g_\lambda$} Since our classifier $g_\lambda$ is not explicitly differentiable with respect to its input, we use the HopSkipJump \cite{chen2020hopskipjumpattack} black-box attack to obtain adversarial perturbations that cause a misclassification. The obtained attacked images, and the pertubation magnitude are shown in \cref{fig:blackbox}. 
\begin{figure}[h]
\centering
\includegraphics[width=\textwidth]{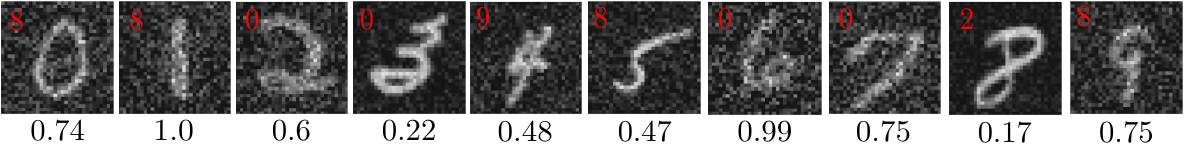}
\caption{A visualization of attacks on our classifier $g_\lambda$ obtained by the HopSkipJump black box attack \cite{chen2020hopskipjumpattack}. The label predicted by $g_\lambda$ is shown as a red annotation on the top left. The $\ell_2$ perturbation magnitude $\epsilon$ is shown at the bottom. Note that the $\epsilon$ values are not nice fractions because they are set by the HopSkipJump attack using a binary search to find the minimum attack strength that causes a misclassification.} 
\label{fig:blackbox}
\end{figure}
\subsection{Details of Randomized Smoothing Certificates}
For obtaining the certified accuracy curves using Randomized Smoothing \cite{cohen2019certified} (solid lines in \cref{fig:comparison1,fig:comparison2}), 
we follow the certification and prediction algorithms for RS in \cite[Sec 3.2.2]{cohen2019certified}: given a base classifier $f$ and a test image $x$, we use $n_0 = 100$ samples from an isotropic gaussian distribution $\mathcal{N}(0,\sigma)$ to prediction the majority class $g_\sigma^{\rm RS}(x)$. We then use $n = 100$ samples to estimate the prediction probability $\underline{p_A}$ under $\mathcal{N}(0, \sigma)$ with confidence atleast $0.999$. If $\underline{p_A}$ is at least $0.5$, we report the certified radius $\sigma \Phi^{-1}(\underline{p_A})$. Otherwise, if $\underline{p_A} < 0.5$, we abstain and return a certified radius of $0$. 

%\subsection{Miscellaneous Experimental Details}
\paragraph{Evaluating the Dual Classifier $g_\lambda$} We normalize each image in the MNIST dataset to have unit $\ell_2$ norm, and resize to $32 \times 32$. We pick $10000$ images $\s_i$ at random from the training set to obtain our data matrix $\S \in \R^{1024 \times 10000}$. We now pick a test point $\x$, and then solve the dual problem \eqref{eq:dualnoise} with $\lambda = 2$, to obtain the dual solution $\d^*_\lambda(\x)$. This allows us to then obtain the set of active constraints $A_\lambda(\x)$ in \eqref{eq:activelambda}. Then, we use the majority rule as the aggregate function in \eqref{eq:g} to obtain the classfication output from the dual classifier $g_\lambda(\x)$. We sample $100$ and $500$ images $x$ uniformly at random from the MNIST test set for generating the curves in \cref{fig:comparison1,fig:comparison2}, respectively. All experiments are performed on a NVIDIA GeForce RTX 2080 Ti GPU with 12 GB memory. 

\subsection{Discussion of Computational Cost}
The computational complexity of obtaining the certificate $C(x)$ in \cref{lem:noisyl2} is dominated by solving the linear optimization problem \eqref{eq:optisproj} to obtain the set $A_\lambda$, which has $n + M$ variables. It is known that in practice, the cost of solving LPs is much lower than the worst case \cite{spielman2004smoothed}, and it takes us $11.7$ seconds on average on a single CPU for each image $x$ without parallel processing.

For comparison to other certified defenses (like Randomized Smoothing), we perform $T = 20$ steps of \eqref{pgdmod}, and each Step II of \eqref{pgdmod} requires solving a quadratic optimization program, given in \eqref{qp}. This is a linearly constrained quadratic program. In practice it takes us $18.1$ seconds on average on a single CPU for each $x$ without parallel processing.
 
\subsection{Experiments on CIFAR-10} \label{app:cifar10}
We provide experiments for our method applied to CIFAR-10. To do this, we first embed each CIFAR-10 image into a feature space, designed such that $\ell_2$-bounded perturbations in the feature space, i.e., $\phi(x + v), \|v\|_2 \leq \epsilon$, correspond to semantic perturbations in the input space, e.g., distorting the image. Such a feature space is obtained following recent popular work in learning perceptual metrics in vision \cite{zhang2018unreasonable}, where the task is precisely to learn a feature representation where $\ell_2$ distances align with human perception. We now use our subspace model on $\phi(X)$, and perform exactly the experiments in \cref{sec:expt}, and make similar observations: we obtain reasonable robust accuracy using our method (\cref{fig:reb1} blue lines), and this accuracy is maintained within our certified polyhedra at high $\epsilon$, where existing defenses are not robust.

Though such experiments showing feature space certificates have appeared in the literature \cite{voravcek2022provably}, the question of whether the perceptual representation is itself susceptible to small adversarial perturbations still remains for future exploration.
\begin{figure}[h!]
    \centering
    \includegraphics[width=0.4\textwidth]{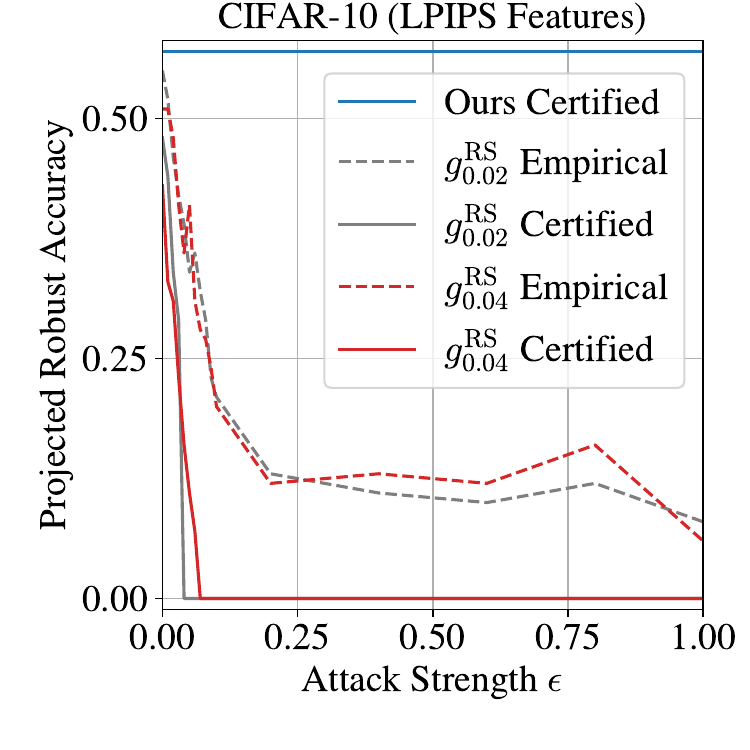} \hspace{4em}
    \includegraphics[width=0.4\textwidth]{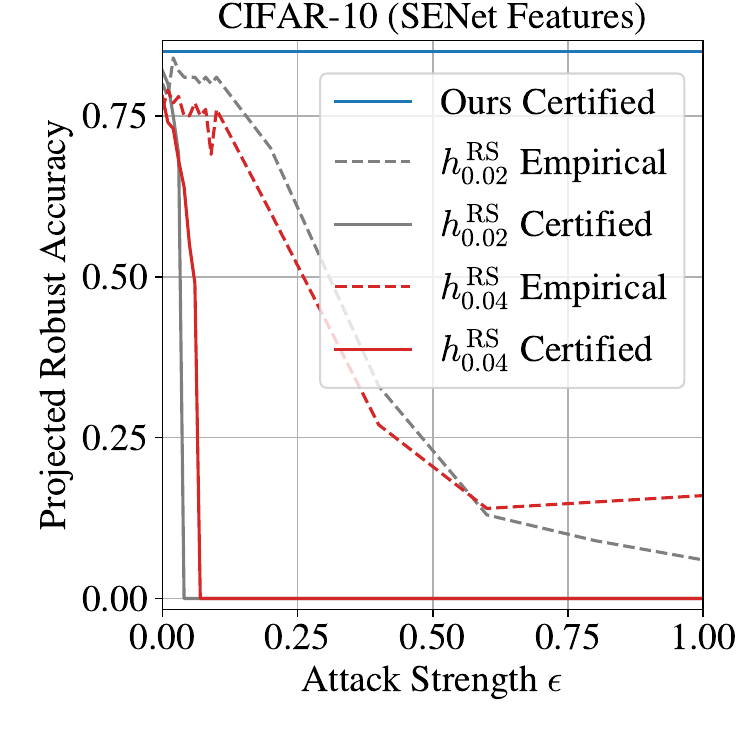}
    \caption{Comparing Randomized Smoothing (RS) with Our Method from \cref{sec:subspace}, for adversarial perturbations computed by repeating Steps I, II from \cref{pgdmod}. This figure is the counterpart of \cref{fig:comparison1}, where the MNIST dataset has been changed to $\phi($CIFAR-10$)$, and the transformation $\phi$ is given by (left) $384$-dimensional slice of LPIPS \cite{zhang2018unreasonable} features (i.e., output of the third Conv2D layer) and (right) $128$-dimensional input features used by SENet \cite{zhang2021learning}. Observation 1: Our method retains robustness even when the dashed empirical upper bound for robust accuracy for the RS classifier drops to random chance. Observation 2: Our robust accuracy (solid blue line) gets higher from left to right, as the feature space of SENet is closer to a union of subspaces than LPIPS. On the other hand, the left feature space aligns better with human perception than the right feature space.}
    \label{fig:reb1}
\end{figure}

\subsection{Evaluating \cref{th:strongconc} on Empirical Distributions} \label{app:cifar10lbcomparison}

The empirical results in \cite{pydi2020adversarial,bhagoji2019lower} create an empirical distribution $\hat p$ by selecting two classes from CIFAR-10, which simply places probability mass $1 / N$ on each of the $N$ samples in the dataset. \cite{pydi2020adversarial,bhagoji2019lower} evaluate their lower bounds on $\hat p$, and not on the actual real world distribution $p$, which might be arbitrarily complex. Hence, it is unclear whether the trends observed hold for $p$. Nevertheless, the same experiment can be translated to our setting for any real world dataset, our \cref{th:strongconc} would then show the existence of a robust classifier for $\hat p$. We would need to solve a discrete optimization problem for finding the concentrated sets in \cref{th:strongconc}. Even though valid only for $\hat p$, this analysis is interesting, and is reported in \cref{fig:reb2}.

\begin{figure}[h]
    \centering
    \includegraphics[width=0.4\textwidth]{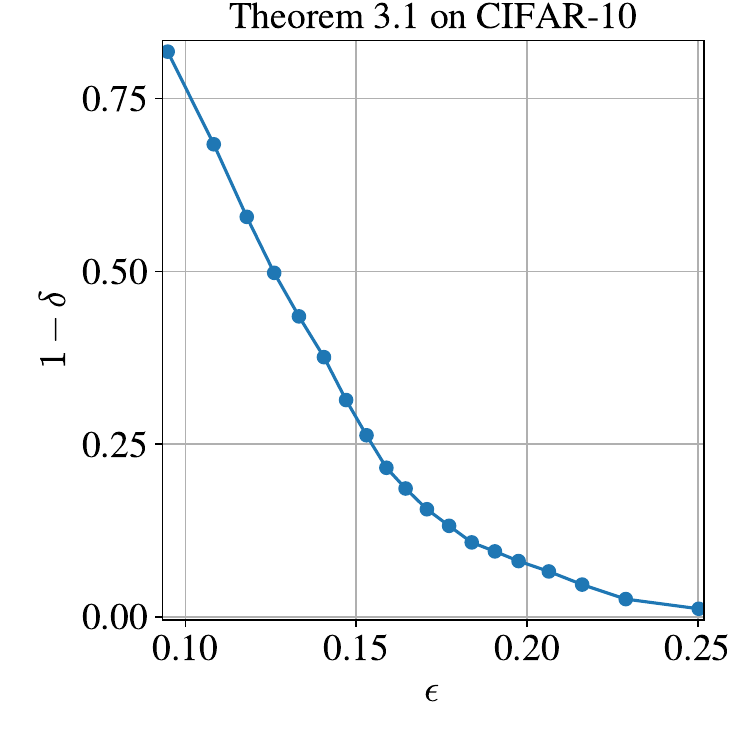}
    \caption{Instantiating \cref{th:strongconc} for the CIFAR-10 empirical distribution. We construct the empirical distribution $\hat p$ which assigns equal probability to each $(x, y)$ pair in the CIFAR-10 test set. This leads to empirical class conditionals $\hat q_1, \ldots, \hat q_{10}$, corresponding to the $10$ classes. We set $\gamma = 0$ in \cref{th:strongconc}, and greedily construct the sets $S_1, \ldots, S_{10}$: sort all images in decreasing order of their distance to their nearest neighbor in the test set, pick the first $m$ images, and then compute the concentration parameters (minimum separation $\epsilon_m$ and mass $\delta_m = 1 - \min_i \hat q_i(S_i)$). We then plot $(\epsilon_m, 1 - \delta_m)$ for several choices of $m$. Applying \cref{th:strongconc}, each $(\epsilon_m, \delta_m)$ guarantees the existence of a $(\epsilon_m, \delta_m)$-robust classifier for $\hat p$.}
    \label{fig:reb2}
\end{figure}

\section{Discussion on Strong Concentration} 
\label{app:strongconcdisc}

\paragraph{Are Natural Image Distributions Strongly Concentrated?} Consider the task of classifying images of dogs vs cats. Any small $\ell_2$ perturbation (say of size $\epsilon = 0.1$) to an image of a dog is not likely to change it to an image of a cat -- the true label (or a human decision) does not change with small perturbations. Let $S_{\rm cat}$ be the set of images that are cats with a high confidence, and similarly for 
$S_{\rm dog}$. What we just said was that any image in the 
$\epsilon$-expansion of $S_{\rm cat}$ is not likely to be a dog, and vice versa. This is precisely the condition of separation in the definition of strong concentration: $q_{\rm dog}(S_{\rm cat}^{+\epsilon}) \leq \gamma$, and $q_{\rm cat}(S_{dog}^{+\epsilon}) \leq \gamma$, for a small $\gamma$.

Now, despite the above, the task of practically classifying an
image into a dog or a cat is not trivial at all, as the sets $S_{\rm dog}$ and $S_{\rm cat}$ might be very complex, and hence it might be computationally hard to find a predictor (neural network, or any other classifier) for distinguishing $S_{\rm dog}$ and $S_{\rm cat}$. Using modern deep learning, one is able to learn an approximation to these sets that leads to small standard error, but this approximation is still bad enough that the learned predictor is very susceptible to adversarial examples. This is to say that task of robustly classifying an image into dog or cat is even harder.

\paragraph{Tightness of Strong Concentration}
In \cref{def:strongconc}, we are trying to obtain a sufficient condition for the existence of a robust classifier. There are two parts to \cref{def:strongconc}: the concentration, and the separation conditions. The first is essential, as \cref{th:concentration} asserts that whenever a robust classifier exists for a data distribution, the class conditionals are concentrated (i.e., it is necessary). 

As we argued above, the separation condition is not very strong, and should be satisfied by image distributions like MNIST, CIFAR-10 and ImageNet. Let us walk through this condition to see if it can be further relaxed. As we note at the start of \cref{sec:sufficientcond}, if all the class conditionals were to be concentrated on subsets $S_k$ that have high intersection among each other, then even benign classification would be hard (i.e., benign risk will be high), let alone robust classification. So, even for the existence of a good benign classifier, it is essential for $S_k \cap S_{k'}$ to be close to empty, for all $k \neq {k'}$.
    
Now, even if $S_k \cap S_{k'}$ were almost empty, for a robust classifier, we care about the $\epsilon$-expansions of these sets to not intersect. In other words, we do not want an $\epsilon$-perturbed cat to look like a dog. Hence, for the existence of a good robust classifier, $S_k \cap S_{k'}^{+\epsilon}$ should be close to empty, for all $k \neq k'$. This can be generalized in measure terms, to require $q_k(S_{k'}^{+\epsilon}) \leq \gamma$ , for a small $\gamma$. Upto this, all these conditions are essential for obtaining a robust classifier. 
    
Now, note that in \cref{def:strongconc}, the expansion is taken for $2\epsilon$, instead of $\epsilon$. This is where our proof technique for \cref{th:strongconc} incurs a slack of $\epsilon$, and we believe a different approach for constructing the robust classifier might be able to reduce the expansion requirement to $\epsilon$. This could be an interesting future avenue for relaxing \cref{def:strongconc} and thereby strengthening \cref{th:strongconc}.

\end{document}